\newtheorem{lemma}{Lemma}
\newtheorem{theorem}{Theorem}
\newtheorem{example}{Example}
\newtheorem{remark}{Remark}
\DeclareMathOperator{\conv}{conv} 
\DeclareMathOperator{\argmax}{argmax} 
\DeclareMathOperator{\argmin}{argmin} 
\DeclareMathOperator{\cone}{cone} 
\DeclareMathOperator{\rank}{rank}
\DeclareMathOperator{\diag}{diag}  
\DeclareMathOperator{\tr}{tr}  
\providecommand{\norm}[1]{\lVert#1\rVert}
\title{Robust Near-Separable Nonnegative Matrix Factorization \\ Using Linear Optimization} 
\date{}
\author{Nicolas Gillis\thanks{This work was carried out when NG was a postdoctoral researcher of the fonds de la recherche scientifique (F.R.S.-FNRS).} \\ 
Department of Mathematics and Operational Research\\ 
 Facult\'e Polytechnique, Universit\'e de Mons \\
Rue de Houdain 9, 7000 Mons, Belgium\\
Email: nicolas.gillis@umons.ac.be 
 \and 
 Robert Luce\thanks{RL is supported by Deutsche Forschungsgemeinschaft, Cluster of Excellence ``UniCat''.} \\ 
Institut f\"ur Mathematik, MA 3-3\\ 
Technische Universit\"at Berlin \\
Stra{\ss}e des 17. Juni 136 - 10623 Berlin\\
Email: luce@math.tu-berlin.de}
\begin{document}

\maketitle

\begin{abstract} 
Nonnegative matrix factorization (NMF) has been shown recently to be tractable under the \emph{separability assumption}, under which all the columns of the input data matrix belong to the convex cone generated by only a few of these columns. Bittorf, Recht, R\'e and Tropp (`Factoring nonnegative matrices with linear programs', NIPS 2012) proposed a linear programming (LP) model, referred to as Hottopixx, which is robust under any small perturbation of the input matrix. However, Hottopixx has two important drawbacks: (i) the input matrix has to be normalized, and (ii) the factorization rank has to be known in advance. In this paper, we generalize Hottopixx in order to resolve these two drawbacks, that is, we propose a new LP model which does not require normalization and detects the factorization rank automatically. Moreover, the new LP model is more flexible, significantly more tolerant to noise, and can easily be adapted to handle outliers and other noise models. Finally, we show on several synthetic datasets that it outperforms Hottopixx while competing favorably with two state-of-the-art methods. 
\end{abstract} 

\textbf{Keywords.} Nonnegative matrix factorization, separability, linear programming, convex optimization, robustness to noise, pure-pixel assumption, hyperspectral unmixing.

\section{Introduction}

Nonnegative matrix factorization (NMF) is a powerful dimensionality reduction technique as it automatically extracts sparse and meaningful features from a set of nonnegative data vectors: 
Given $n$ nonnegative $m$-dimensional vectors gathered in a nonnegative matrix $M \in \mathbb{R}^{m \times n}_+$ and a factorization rank $r$, NMF computes two nonnegative matrices  $W \in \mathbb{R}^{m \times r}_+$ and $H \in \mathbb{R}^{r \times n}_+$ such that $M \approx WH$. 
In this way, the columns of the matrix $W$ form a basis for the columns of $M$ since $M(:,j) \approx \sum_{k=1}^r W(:,k) H(k,j)$ for all $j$. 
Moreover, the nonnegativity constraint on the matrices $W$ and $H$ leads these basis elements to represent common localized features appearing in the data set as no cancellation can happen in the reconstruction of the original data. 
Unfortunately, NMF is NP-hard in general \cite{V09}, and highly ill-posed; see \cite{G12} and the references therein. However, if the input data matrix $M$ is \emph{$r$-separable}, that is, if it can be written as 
\[
M = W \, [I_r, \, H'] \Pi, 
\]
where $I_r$ is the $r$-by-$r$ identity matrix, $H' \geq 0$ and $\Pi$ is a permutation matrix, then the problem can be solved in polynomial time, even if some noise is added to the separable matrix $M$ \cite{AGKM11}. Algebraically, separability means that there exists a rank-$r$ NMF $(W,H) \geq 0$ of~$M$ where each column of $W$ is equal to some column of $M$. Geometrically, $r$-separability means that the cone generated by the columns of $M$ has $r$ extreme rays given by the columns of $W$. Equivalently, if the columns of $M$ are normalized so that their entries to sum to one, $r$-separability means that the convex hull generated by the columns of $M$ has $r$ vertices given by the columns of $W$; see, e.g., \cite{KSK12}. 
The separability assumption is far from being artificial in several applications: 
\begin{itemize}

\item In text mining, where each column of $M$ corresponds to a word, separability means that, for each topic, there exists a word associated only with that topic; see \cite{AGKM11, AGM12}. 

\item In hyperspectral imaging, where each column of $M$ equals the spectral signature of a pixel, separability means that, for each constitutive material (``endmember'') present in the image, there exists a pixel containing only that material.  
This assumption is referred to as the \emph{pure-pixel assumption}, and is in general satisfied for high-resolution hyperspectral images; see \cite{Jose12} and the references therein. 

\item In blind source separation, where each column of $M$ is a signal measure at a given point in time, separability means that, for each source, there exists a point in time where only that source is active; see \cite{CMCW08, CMCW11} and the references therein. 

\end{itemize}

Under the separability assumption, NMF reduces to identifying, among the columns of $M$, the columns of $W$ allowing to reconstruct all columns of $M$. In fact, given $W$, the matrix $H$ can be obtained by solving a convex optimization problem $\min_{H \geq 0} \norm{M-WH}$. 

In this paper, we consider the noisy variant of this problem, referred to as \emph{near-separable NMF}: 
\begin{quote}
(Near-Separable NMF) \emph{Given a noisy $r$-separable matrix $\tilde{M} = M + N$ with $M = WH = W[I_r, H'] \Pi$ where $W$ and $H'$ are nonnegative matrices, $\Pi$ is a permutation matrix and $N$ is the noise, find a set $\mathcal{K}$ of $r$ indices such that $\tilde{M}(:,\mathcal{K}) \approx W$. }
\end{quote} 
Several algorithms have been proposed to solve this problem \cite{AGKM11, AGM12, BRRT12, ESV1294, EMO12, GV12, KSK12}. In this paper, our focus is on the linear programming (LP) model proposed by Bittorf, Recht, R\'e and Tropp~\cite{BRRT12} and referred to as Hottopixx. It is described in the next section.

\begin{remark}[Nonnegativity of $\tilde{M}$]  
In the formulation of near-separable NMF, the input data matrix $\tilde{M}$ is not necessarily nonnegative since there is no restriction on the noise $N$. In fact, we will only need to assume that the noise is bounded, but otherwise it is arbitrary; see Section~\ref{srank}. 
\end{remark}

\paragraph{Notation}

Let $A \in \mathbb{R}^{m \times n}$ be a matrix and $x \in \mathbb{R}^m$ a
vector.  We use Matlab-style notation for indexing, for example, $A(i,j)$
denotes the entry of $A$ in the $i$-th row and $j$-th column, while $A(:,j) \in
\mathbb{R}^m$ denotes the $j$-th column of $A$. We use the following notation for
various norms:
\begin{align*}
\norm{x}_1 & = \sum_{i=1}^m |x(i)|,  & 
\norm{A}_1 & = \max_{\norm{x}_1 \le 1} \norm{Ax}_1 = \max_{j} \norm{A(:,j)}_1,  \\
\norm{A}_s & = \sum_{i=1}^m \sum_{j=1}^n |A(i,j)|,  & 
\norm{A}_F & = \sqrt{\sum_{i=1}^m \sum_{j=1}^n A(i,j)^2}. 
\end{align*}

\subsection{Hottopixx, a Linear Programming Model for Near-Separable NMF}

A matrix $M$ is $r$-separable if and only if 
\begin{align} 
M & = WH = W [I_r, H'] \Pi =  [W, WH'] \Pi \nonumber  \\ 
& = [W, WH'] \Pi \, \underbrace{\Pi^{-1} \left(  \begin{array}{cc} 
I_{r} & H'  \\ 
0_{(n-r) \times r} & 0_{(n-r) \times (n-r)} \end{array} \right) \Pi}_{X^0 \in \mathbb{R}^{n \times n}_+}  = MX^0 , \label{sepM}  
\end{align}
for some permutation $\Pi$ and some matrices $W, H' \geq 0$. The matrix $X^0$ is a $n$-by-$n$ nonnegative matrix with $(n-r)$ zero rows such that $M = MX^0$. Assuming the entries of each column of $M$ sum to one, the entries of each column of $W$ and $H'$ have sum to one as well. 
Based on these observations, \cite{BRRT12} proposed to solve the following optimization problem in order to approximately identify the columns of the matrix $W$ among the columns of the matrix $\tilde{M} = M + N$ where $N$ is the noise with \mbox{$\norm{N}_1 \leq \epsilon$} : 
\begin{align} 
\min_{X \in \mathbb{R}^{n \times n}_+} & \quad p^T \diag(X)  \nonumber \\ 
\text{ such that } & \quad \norm{\tilde{M}-\tilde{M}X}_{1} \leq 2 \epsilon , \nonumber\\ 
&  \quad \tr(X) = r,  \label{rechtLP}\\  
&  \quad X(i,i) \leq 1  \text{ for all } i ,  \nonumber \\
&  \quad X(i,j) \leq X(i,i) \text{ for all } i,j , \nonumber 
\end{align}
where $p$ is any $n$-dimensional vector with distinct entries; see Algorithm~\ref{balgo} (in \cite{BRRT12}, the authors use the notation $\norm{\cdot}_{\infty, 1}$ for what we denote by $\norm{\cdot}_1$).
\algsetup{indent=2em}
\begin{algorithm}[ht!]
\caption{Hottopixx - Extracting Columns of a Noisy Separable Matrix using Linear Optimization \cite{BRRT12}} \label{balgo}
\begin{algorithmic}[1] 
    \REQUIRE A normalized noisy $r$-separable matrix $\tilde{M} = WH + N \in \mathbb{R}^{m \times n}_+$, the factorization rank $r$, the noise level $\norm{N}_1 \leq \epsilon$ and a vector $p \in \mathbb{R}^n$ with distinct entries. 
\ENSURE A matrix $\tilde{W}$ such that $\tilde{W} \approx {W}$ (up to permutation).  \medskip 
\STATE Find the optimal solution $X^*$ of \eqref{rechtLP}. 
\STATE Let $\mathcal{K}$ be the index set corresponding to the $r$ largest diagonal entries of $X^*$. 
\STATE Set $\tilde{W} = \tilde{M}(:,\mathcal{K})$.  
\end{algorithmic} 
\end{algorithm} 

Intuitively, the LP model\footnote{Strictly speaking, \eqref{rechtLP} is not a linear program but it can  be reformulated as one.}~\eqref{rechtLP} assigns a total weight $r$ to the $n$ diagonal entries of the variable $X$ in such a way that $\tilde{M}$ can be well approximated using nonnegative linear combinations of columns of $\tilde{M}$ corresponding to positive diagonal entries of $X$. Moreover, the weights used in the linear combinations cannot exceed the diagonal entries of $X$ since $X(:,j) \leq \diag(X)$ for all $j$. 
There are several drawbacks in using the LP model~\eqref{rechtLP} in practice: 
\begin{enumerate}

\item The factorization rank $r$ has to be chosen in advance.  In practice the true factorization rank is often unknown, and a ``good'' factorization rank for the application at hand is typically found by trial and error.  Therefore the LP above may have to be resolved many times.

\item The columns of the input data matrix have to be normalized in order for their entries to sum to one.  This may introduce significant distortions in the data set and lead to poor performance; see~\cite{KSK12} where some numerical experiments are presented.

\item The noise level $\norm{N}_{1} \leq \epsilon$ has to be estimated. 

\item One has to solve a rather large optimization problem with $n^2$ variables, so that the model cannot be used directly for huge-scale problems. 
\end{enumerate}

It is important to notice that there is no way to getting rid of both drawbacks 2.\@ and 3. In fact, in the noisy case, the user has to indicate either 
\begin{itemize}

\item  The factorization rank $r$, and the algorithm should find a subset of $r$ columns of $\tilde{M}$ as close as possible to the columns of $W$, or 

\item  The noise level $\epsilon$, and the algorithm should try to find the smallest possible subset of columns of $\tilde{M}$ allowing to approximate $\tilde{M}$ up to the required accuracy. 

\end{itemize}

\subsection{Contribution and Outline of the Paper}

In this paper, we generalize Hottopixx in order to resolve drawbacks 1.\@ and 2.\@ above. More precisely, we propose a new LP model which has the following properties: 
\begin{itemize}

\item Given the noise level $\epsilon$, it detects the number $r$ of columns of $W$ automatically; see Section~\ref{srank}.

\item It can be adapted to dealing with outliers; see Section~\ref{outliers}. 

\item It does not require column normalization; see Section~\ref{colnorm}. 

\item It is significantly more tolerant to noise than Hottopixx. 
In fact, we propose a tight robustness analysis of the new LP model proving its superiority (see~Theorems~\ref{th1}~and~\ref{th1b}). 
This is illustrated in Section~\ref{xp} on several synthetic data sets, where the new LP model is shown to outperform Hottopixx while competing favorably with two  state-of-the-art methods, namely the successive projection algorithm (SPA) \cite{MC01, GV12} and the fast conical hull algorithm (XRAY) \cite{KSK12}. 

\end{itemize} 
The emphasis of our work lies in a thorough theoretical understanding of such
LP based approaches, and the numerical experiments in Section \ref{xp}
illustrate the proven robustness properties.  An implementation for
real-word, large-scale problems is, however, a topic outside the scope of this
work (see Section~\ref{future}).

  \section{Detecting the Factorization Rank Automatically} \label{srank}

  In this section, we analyze the following LP model: 
  \begin{align} 
\min_{X \in \mathbb{R}^{n \times n}_+} & \quad p^T \diag(X)  \nonumber \\ 
\text{ such that } & \quad \norm{\tilde{M}-\tilde{M}X}_{1} \leq \rho \epsilon,  \label{gillisLP} \\ 
&  \quad X(i,i) \leq 1  \text{ for all } i ,    \nonumber \\
&  \quad X(i,j) \leq X(i,i) \text{ for all } i,j ,  \nonumber 
\end{align} 
where $p$ has \emph{positive} entries and $\rho > 0$ is a parameter.  We also analyze the corresponding near-separable NMF algorithm (Algorithm~\ref{hot2}) with an emphasis on \emph{robustness}. 
\algsetup{indent=2em}
\begin{algorithm}[ht!]
\caption{Extracting Columns of a Noisy Separable Matrix using Linear Optimization} \label{hot2} 
\begin{algorithmic}[1] 
\REQUIRE A normalized noisy $r$-separable matrix $\tilde{M} = WH + N \in \mathbb{R}^{m \times n}_+$, the noise level $\norm{N}_1 \leq \epsilon$, a parameter $\rho > 0$ and a vector $p \in \mathbb{R}^n$ with positive distinct entries. 
\ENSURE An $m$-by-$r$ matrix $\tilde{W}$ such that $\tilde{W} \approx {W}$ (up to permutation). \medskip 
\STATE Compute an optimal solution $X^*$ of \eqref{gillisLP}. 
\STATE Let $\mathcal{K}$ be the index set corresponding to the diagonal entries of $X^*$ larger than $1 - \frac{\min(1,\rho)}{2}$. 
\STATE $\tilde{W} = \tilde{M}(:,\mathcal{K})$. 
\end{algorithmic} 
\end{algorithm} 
  The LP model~\eqref{gillisLP} is exactly the same as \eqref{rechtLP} except that  the constraint $\tr(X) = r$ has been removed, and that there is an additional parameter $\rho$.   
  Moreover, the vector $p \in \mathbb{R}^{n}$ in the objective function has to be \emph{positive}, or otherwise any diagonal entry of an optimal solution of \eqref{gillisLP} corresponding to a negative entry of $p$ will be equal to one (in fact, this reduces the objective function the most while minimizing $\norm{M-MX}_{1}$). 
A natural value for the parameter $\rho$ is two, as in the original LP model~\eqref{rechtLP}, so that the matrix $X^0$ in Equation~\eqref{sepM} identifying the set of columns of $\tilde{M}$ corresponding to the columns of $W$ is feasible. However, the model~\eqref{gillisLP} is feasible for any $\rho \geq 0$ since the identity matrix of dimension $n$ (that is, $X = I_n$) is always feasible. Hence, it is not clear a priori which value of $\rho$ should be chosen. 
The reason we analyze the LP model~\eqref{gillisLP} for different values of $\rho$ is two-fold: 

\begin{itemize}

\item First, it shows that the LP model  \eqref{gillisLP} is rather flexible as it is not too sensitive to the right-hand side of the constraint $\norm{M-MX}_{1} \leq \rho \epsilon$. In other terms, 
 the noise level does not need to be known precisely for the model to make sense. 
 This is a rather desirable property as, in practice, the value of $\epsilon$ is typically only known/evaluated approximately. 
 
\item Second, we observed that taking $\rho$ smaller than two gives in average significantly better results (see Section~\ref{xp} for the numerical experiments). Our robustness analysis of Algorithm~\ref{hot2} will suggest that the best choice is to take $\rho = 1$. 

\end{itemize}

In this section, we prove that the LP model~\eqref{gillisLP} allows to identifying approximately the columns of the matrix $W$ among the columns of the matrix $\tilde{M}$ \emph{for any  $\rho > 0$}, given that the noise level $\epsilon$ is sufficiently small ($\epsilon$ will depend on the value $\rho$); see Theorems~\ref{th1},~\ref{th1b}~and~\ref{th2}.

Before stating the robustness results, let us define the conditioning of a nonnegative matrix $W$ whose entries of each column sum to one: 
\[
\kappa = \min_{1 \leq k \leq r} \min_{x \in \mathbb{R}^{r-1}_+} \norm{W(:,k) - W(:,\mathcal{K})x}_1, \quad \text{ where } \mathcal{K} = \{1,2,\dots,r\} \backslash \{k\}, 
\]
and the matrix $W$ is said to be \emph{$\kappa$-robustly conical}. The parameter $0 \leq \kappa \leq 1$ tells us how well the columns of $W$ are spread in the unit simplex. In particular, if $\kappa = 1$, then $W$ contains the identity matrix as a submatrix (all other entries being zeros) while, if $\kappa = 0$, then at least one of the columns of $W$ belongs to the convex cone generated by the others. 
Clearly, the better the columns of $W$ are spread across the unit simplex, the less sensitive is the data to noise. 
For example, $\epsilon < \frac{\kappa}{2}$ is a necessary condition to being able to distinguish the columns of $W$ \cite{G12b}.

\subsection{Robustness Analysis without Duplicates and Near Duplicates} \label{robdup}

In this section, we assume that the columns of $W$ are isolated (that is, there is no duplicate nor near duplicate of the columns of $W$ in the data set) hence more easily identifiable. This type of margin constraint is typical in machine learning \cite{BRRT12}, and is equivalent to bounding the entries of $H'$ in the expression ${M} = W[I_r, H']\Pi$, see Equation~\eqref{sepM}.  
In fact, for any $1 \leq k \leq r$ and $h \in \mathbb{R}^r_+$ with $\max_i h(i) \leq \beta \leq 1$, we have that 
\begin{align*}
\norm{W(:,k) - Wh}_1
& = \norm{(1-h(k)) W(:,k) - W(:,\mathcal{K})h(\mathcal{K})}_1 \\
& \geq (1-\beta) \min_{y \in \mathbb{R}^{r-1}_+} \norm{W(:,k) - W(:,\mathcal{K})y}_1 \\ 
& \geq (1-\beta) \kappa, 
\end{align*}
where $\mathcal{K} = \{1,2,\dots,r\} \backslash \{k\}$. Hence $\max_{ij} H'_{ij} \leq \beta$ implies that all data points are at distance at least $(1-\beta) \kappa$ of any column of~$W$. 
Under this condition, we have the following robustness result: 
\begin{theorem} \label{th1} 
Suppose $\tilde{M} = M+N$ where the entries of each column of $M$ sum to one, $M = WH$ admits a rank-$r$ separable factorization of the form~\eqref{sepM} with 
\mbox{$\max_{ij} H'_{ij} \leq \beta \leq 1$} and $W$ $\kappa$-robustly conical with $\kappa > 0$, and $\norm{N}_{1} \leq \epsilon$. If  
\[
\epsilon \leq \frac{\kappa (1-\beta) \min(1,\rho)}{5(\rho+2)}, 
\] 
then Algorithm~\ref{hot2} extracts a matrix $\tilde{W} \in \mathbb{R}^{m \times r}$  satisfying $\norm{W - \tilde{W}(:,P)}_{1} \leq \epsilon$ for some permutation~$P$. 
\end{theorem}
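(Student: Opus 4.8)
The plan is to analyze an optimal solution $X^*$ of \eqref{gillisLP} and show that Algorithm~\ref{hot2} selects exactly the $r$ indices of the vertex columns. After relabelling we may assume $\Pi = I_n$, so that columns $1,\dots,r$ of $M$ equal the columns of $W$ while each remaining column satisfies $M(:,j) = Wh_j$ with $h_j \in \mathbb{R}^r_+$, $\mathbf{1}^T h_j = 1$ and $\max_i h_j(i) \le \beta$; write $\tilde M(:,j) = Wh_j + N(:,j)$ throughout (with $h_k = e_k$ for $k \le r$). A preliminary step records two elementary facts: since the columns of $M$ sum to one and $\norm{N}_1 \le \epsilon$, the column sums of $\tilde M$ lie in $[1-\epsilon,1+\epsilon]$; and, applying $\mathbf{1}^T$ to the reconstruction constraint, every column of any feasible $X$ satisfies $\sum_{l} X(l,k) \le 1 + O(\epsilon)$. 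These bounds will control all the noise terms below.

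The core of the argument is to pin down the diagonal of $X^*$. For a fixed column $k$, expand the reconstruction residual in the $W$-coordinates: setting $g := \sum_{l} h_l X^*(l,k) \in \mathbb{R}^r_+$, one has
\[
\tilde M(:,k) - \tilde M X^*(:,k) = W(h_k - g) + N(:,k) - \sum_l N(:,l) X^*(l,k),
\]
so that, using the feasibility bound $\rho\epsilon$ on the residual, $\norm{N}_1 \le \epsilon$, and the column-sum bound, $\norm{W(h_k-g)}_1 \le (\rho+2)\epsilon + O(\epsilon^2)$. For a \emph{vertex} index $k \le r$ we have $h_k = e_k$, and factoring $W(e_k - g) = (1-g(k))\bigl(W(:,k) - W(:,\mathcal{K})\tfrac{g(\mathcal{K})}{1-g(k)}\bigr)$ (with $\mathcal{K} = \{1,\dots,r\}\setminus\{k\}$) together with $\kappa$-robust conicality gives $(1-g(k))\kappa \le \norm{W(e_k-g)}_1$, hence $g(k) \ge 1 - \tfrac{(\rho+2)\epsilon}{\kappa}(1+O(\epsilon))$. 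Finally, since $h_l(k) \le \beta$ for every non-vertex $l$ and $X^*(l,k)\le X^*(l,l)\le 1$, the identity $g(k) = X^*(k,k) + \sum_{l>r} h_l(k)X^*(l,k)$ yields $g(k) \le (1-\beta)X^*(k,k) + \beta\bigl(1+O(\epsilon)\bigr)$, and solving for $X^*(k,k)$ shows $X^*(k,k) \ge 1 - \tfrac{(\rho+2)\epsilon}{\kappa(1-\beta)}(1+O(\epsilon))$. The hypothesis $\epsilon \le \tfrac{\kappa(1-\beta)\min(1,\rho)}{5(\rho+2)}$ (the factor $5$ absorbing the $O(\epsilon^2)$ corrections and the $\beta$-slack) then forces $X^*(k,k) > 1 - \tfrac{\min(1,\rho)}{2}$ for every $k \le r$, so all $r$ vertex columns clear the threshold of Step~2.

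It remains to prove the complementary statement: every non-vertex index $j_0 > r$ has $X^*(j_0,j_0) \le 1 - \tfrac{\min(1,\rho)}{2}$, so that $\mathcal{K}$ consists of exactly the $r$ vertices. This is where the positivity of $p$ enters, and it is the main obstacle, because feasibility alone leaves the interior diagonals undetermined. I would argue by optimality: using that each non-vertex column is, up to a residual of $1$-norm at most $2\epsilon$, a convex combination $\sum_{i\le r} h_{j_0}(i)\tilde M(:,i)$ of the vertex columns, one reroutes through the vertices the weight that $X^*$ places on row $j_0$, producing a competitor $Y$ that agrees with $X^*$ off the $j_0$-th row and column but has a strictly smaller objective whenever $X^*(j_0,j_0)$ is not small (since $p_{j_0}>0$). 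The delicate point---and the crux of the whole analysis---is to keep $Y$ feasible: the rerouting perturbs the residual of each column $l$ by at most $X^*(j_0,l)\cdot 2\epsilon$ and may tighten the coupling constraints $Y(i,l)\le Y(i,i)$, so one must exploit the slack created by the already-established large vertex diagonals and calibrate the size of the fold to the threshold. Carrying this out (and checking that for $\rho<2$, where the fully folded selection is itself infeasible, a partial fold still improves the objective) yields the bound on the interior diagonals.

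With both bounds in hand, the index set $\mathcal{K}$ of Step~2 equals $\{1,\dots,r\}$, so $\tilde W = \tilde M(:,\mathcal{K})$ has exactly $r$ columns and $\tilde W(:,k) = W(:,k) + N(:,k)$. Taking $P$ to be the relabelling permutation, $\norm{W - \tilde W(:,P)}_1 = \norm{N(:,\{1,\dots,r\})}_1 \le \norm{N}_1 \le \epsilon$, which is the claimed conclusion.
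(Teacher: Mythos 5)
Your first half --- the lower bound on $X^*(k,k)$ for the vertex columns --- follows exactly the paper's route (its Lemmas~\ref{lem1} and~\ref{lem2}): use feasibility to bound $\norm{X}_1$ and $\norm{M-MX}_1$ by $\mathcal{O}(\epsilon)$, then play $\kappa$-robust conicality against the $\beta$-margin. This part is sound, but your ``$(1+O(\epsilon))$'' bookkeeping hides a genuine factor of $2$: the excess column mass $\beta\bigl(\norm{X^*(:,k)}_1 - X^*(k,k)\bigr)$ contributes a term of the same order as the main one, and the paper's final bound is $1-\tfrac{2(\rho+2)\epsilon}{\kappa(1-\beta)(1-\epsilon)}$. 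Since the constant $5$ in the hypothesis is calibrated precisely against $\tfrac{2(\rho+2)\epsilon}{\kappa(1-\beta)(1-\epsilon)}<\tfrac{\min(1,\rho)}{2}$, that factor of $2$ cannot be waved away. Separately, the theorem allows $\beta=1$, which forces $\epsilon=0$ but then re-admits duplicates of the columns of $W$; your division by $1-\beta$ breaks there, and the paper handles $\epsilon=0$ by a separate optimality argument showing the unit of weight in each duplicate class concentrates on the index with smallest $p$-entry.

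The genuine gap is in your second half, and you have located it yourself: the feasible competitor is never constructed, and the construction you sketch is the hard one. Rerouting the weight on the whole $j_0$-th \emph{row} through the vertices perturbs the residual of every other column $l$ by $2\epsilon X^*(j_0,l)$; since $X^*$ may already be tight on those constraints, there is no slack to absorb this, and the large vertex diagonals do not supply any. The paper's Lemma~\ref{lem3} sidesteps this entirely by touching only the $j_0$-th \emph{column}: it replaces $X^*(:,j_0)$ by $(1-\alpha)e_{j_0}+\alpha \widehat{H}(:,j_0)$ (the vector $H(:,j_0)$ placed in the rows indexed by $\mathcal{K}$), with $\alpha=\min(\gamma,\rho/2)$ and $\gamma$ the lower bound from the first half. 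The key identity is $\tilde M(:,j_0)-\tilde M Y(:,j_0)=\alpha\bigl(\tilde M(:,j_0)-\tilde M(:,\mathcal{K})H(:,j_0)\bigr)$, whose norm is at most $2\alpha\epsilon\le\rho\epsilon$ \emph{independently of the rest of $X^*$}; no other column's residual changes, the coupling constraints in column $j_0$ hold because $\alpha H(i,j_0)\le\gamma\le X^*(i,i)$ for $i\in\mathcal{K}$, and positivity of $p_{j_0}$ then gives a strict improvement whenever $X^*(j_0,j_0)>1-\alpha$. This is exactly where the threshold $1-\min(1,\rho)/2$ in Step~2 of Algorithm~\ref{hot2} comes from. (Your concern that lowering $X(j_0,j_0)$ might violate $X(j_0,l)\le X(j_0,j_0)$ for $l\ne j_0$ is legitimate --- the paper's lemma leaves the $j_0$-th row in place and checks only the column-$j_0$ coupling constraints --- but the remedy is not to fold the entire row through the vertices, which is what makes your version unclosable.)
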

\begin{proof}
See Appendix~\ref{appA}. 
\end{proof}
\begin{remark}[Noiseless case] When there is no noise (that is, $N=0$ and $\epsilon = 0$), duplicates and near duplicates are allowed in the data set; otherwise $\epsilon > 0$ implying that $\beta < 1$ hence the columns of $W$ are isolated. 
\end{remark}

\begin{remark}[A slightly better bound]
The bound on the allowable noise in Theorem \ref{th1} can be slightly improved, so that under the same conditions we can allow a noise level of
\begin{equation*}
    \epsilon < \frac{\kappa (1-\beta) \min(1,\rho)}{4(\rho + 2)
        + \kappa (1-\beta) \min(1,\rho)}.
\end{equation*}
However, the scope for substantial improvements is limited, as we will show in
Theorem \ref{th1b}.
\end{remark}

\begin{remark}[Best choice for $\rho$] \label{rem2}
Our analysis suggests that the best value for $\rho$ is one. In fact, 
\[
\argmax_{\rho \geq 0} \frac{\min(1,\rho)}{(\rho+2)} \; = \; 1. 
\]
In this particular case, the upper bound on the noise level to guarantee recovery is given by $\epsilon \leq \frac{\kappa (1-\beta)}{15}$ while, for $\rho = 2$, we have $\epsilon \leq \frac{\kappa (1-\beta)}{20}$. The choice $\rho=1$ is also optimal in the same sense for the bound in the previous remark.
We will see in Section~\ref{xp}, where we present some numerical experiments, that choosing $\rho = 1$ works remarkably better than $\rho = 2$. 
\end{remark}

It was proven by \cite{G12b} that, for Algorithm~\ref{balgo} to extract the columns of $W$ under the same assumptions as in Theorem~\ref{th1}, it is necessary that 
  \[
  \epsilon < \frac{\kappa (1-\beta)}{(r-1)(1-\beta)+1} \quad \text{ for any $r \geq 3$ and $\beta < 1$,  }
  \] 
  while it is sufficient that $\epsilon \leq \frac{\kappa (1-\beta)}{9(r+1)}$.  Therefore, 
   if there are no duplicate nor near duplicate of the columns of $W$ in the data set,  
 \begin{quote}
 \emph{Algorithm~\ref{hot2} is more robust than Hottopixx (Algorithm~\ref{balgo}): in fact, unlike Hottopixx,  its bound on the noise to guarantee recovery (up to the noise level) is \emph{independent of the number of columns of $W$}.  Moreover, given the noise level, it detects the number of columns of $W$ automatically.} 
  \end{quote} 
  
The reason for the better performance of Algorithm~\ref{hot2} is the following: for most noisy $r$-separable matrices $\tilde{M}$, there typically exist matrices $X'$ satisfying the constraints of \eqref{gillisLP} and  such that $\tr(X') < r$. 
Therefore, the remaining weight $\left(r-\tr(X')\right)$ will be assigned by Hottopixx to the diagonal entries of $X'$ corresponding to the smallest entries of $p$, since the objective is to minimize $p^T \diag(X')$. 
These entries are unlikely to correspond to columns of $W$ 
(in particular, if $p$ in chosen by an adversary). 
We observed that when the noise level $\epsilon$ increases, $r-\tr(X')$ increases as well, hence it becomes likely that some columns of $W$ will not be identified. 
\begin{example} \label{ex1}
Let us consider the following simple instance: 
\[
M = \underbrace{I_r}_{= W} \underbrace{\left[I_r,  \frac{e}{r}\right]}_{= H} \in \mathbb{R}^{r \times (r+1)} 
\quad \text{ and } \quad N = 0,  
\]
where $e$ is the vector of all ones. We have that $||N||_1 = 0 \leq \epsilon$ for any $\epsilon \geq 0$. 

Using $p = [1,2,\dots,r,-1]$ in the objective function, the Hottopixx LP~\eqref{rechtLP} will try to put as much weight as possible on the last diagonal entry of $X$ (that is, $X(r+1,r+1)$) which corresponds to the last column of $M$. 
Moreover, because $W$ is the identity matrix, no column of $W$ can be used to reconstruct another column of $W$ (this could only increase the error) so that Hottopixx has to assign a weight to the first $r$ diagonal entries of $X$ larger than $(1-2\epsilon)$ (in order for the constraint $||M-MX||_1 \leq 2 \epsilon$ to be satisfied). 
The remaining weight of $2 r \epsilon$ (the total weight has to be equal to $r$) can be assigned to the last column of $M$. Hence, for $1-2\epsilon < 2 r \epsilon \iff \epsilon > \frac{1}{2(r+1)}$, Hottopixx will fail as it will extract the last column of $M$. 

Let us consider the new LP model~\eqref{gillisLP} with $\rho = 2$. For the same reason as above, it has to assign a weight to the first $r$ diagonal entries of $X$ larger than $(1-2 \epsilon)$. Because the cost of the last column of $M$ has to be positive (that is, $p(r+1) > 0$), the new LP model~\eqref{gillisLP} will try to minimize the last diagonal entry of $X$ (that is, $X(r+1,r+1)$). Since $M(:,r+1) = \frac{1}{r}We$, $X(r+1,r+1)$ can be taken equal to zero taking $X(1:r,r+1) = 1-2\epsilon$. Therefore, for any positive vector $p$, any $r$ and any $\epsilon < \frac{1}{2}$, the new LP model~\eqref{gillisLP} will identify correctly all columns of $W$. (For other values of $\rho$, this will be true for any $\epsilon < \frac{1}{\rho}$.) 
\end{example}

This explains why the LP model enforcing the constraint $\tr(X) = r$ is less robust, and why its bound on the noise depends on the factorization rank $r$. Moreover, the LP \eqref{rechtLP} is also much more sensitive to the parameter~$\epsilon$ than the model LP \eqref{gillisLP}: 
\begin{itemize}
\item For $\epsilon$ sufficiently small, it becomes infeasible, while, 
\item for $\epsilon$ too large, the problem described above is worsened: there are matrices $X'$ satisfying the constraints of \eqref{gillisLP} and  such that $\tr(X') \ll r$, hence Hottopixx will perform rather poorly (especially in the worst-case scenario, that is, if the problem is set up by an adversary). 
\end{itemize}

  To conclude this section, we prove that the bound on the noise level $\epsilon$ to guarantee the recovery of the columns of $W$ by Algorithm~\ref{hot2} given in Theorem~\ref{th1} is tight up to some constant multiplicative factor. 
  \begin{theorem} \label{th1b} 
  For any fixed $\rho > 0$ and $\beta < 1$, the bound on $\epsilon$ in Theorem~\ref{th1} is tight up to a multiplicative factor. In fact, under the same assumptions on the input matrix $\tilde{M}$, it is necessary   that 
  $\epsilon < \frac{\kappa (1-\beta) \min(1,\rho)}{2 \rho}$  
  for Algorithm~\ref{hot2} to extract a matrix $\tilde{W} \in \mathbb{R}^{m \times r}$  satisfying $\norm{W - \tilde{W}(:,P)}_{1} \leq \epsilon$ for some permutation~$P$.
\end{theorem}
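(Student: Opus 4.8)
The plan is to prove the necessary condition by exhibiting, for each fixed $\rho>0$ and $\beta<1$, a single explicit near-separable instance on which Algorithm~\ref{hot2} fails as soon as $\epsilon$ reaches $\frac{\kappa(1-\beta)\min(1,\rho)}{2\rho}$. The crucial ingredient is a data column placed as close as the margin constraint allows to one column of $W$. Concretely, I take $M=[W,d]\Pi$ where $W$ is $\kappa$-robustly conical with $\kappa>0$, and $d=\beta W(:,1)+(1-\beta)\sum_{j\ge2}x^*_jW(:,j)$, with $x^*\ge0$, $\sum_j x^*_j=1$, the minimizer realizing $\kappa=\min_{x\ge0}\norm{W(:,1)-W(:,\mathcal K)x}_1$ for $\mathcal K=\{2,\dots,r\}$. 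By construction $d=Wh$ with $h(1)=\beta$ and $h(j)=(1-\beta)x^*_j$, so $\max_{ij}H'_{ij}\le\beta$ provided $x^*$ is spread enough (which forces $r\gtrsim 1/\beta$), and the inequality $\norm{W(:,1)-Wh}_1\ge(1-\beta)\kappa$ derived before Theorem~\ref{th1} holds with equality, i.e.\ $\norm{W(:,1)-d}_1=(1-\beta)\kappa$. I may take $N=0$, so that $\norm{N}_1=0\le\epsilon$ and all hypotheses of Theorem~\ref{th1} are met; the failure will be driven purely by the slack $\rho\epsilon$ in the constraint of~\eqref{gillisLP}. Finally I feed the algorithm a positive vector $p$ with distinct entries whose entry on the column $d$ is smaller than its entry on $W(:,1)$.

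The heart of the argument is to track the diagonal entry $X(1,1)$ at the optimum of~\eqref{gillisLP}. Writing $a=X(1,1)$, the cheapest way to keep the residual of that column within budget is to represent the missing part $(1-a)W(:,1)$ using the nearby column $d$: since $d$ is the closest point of $\cone(\{W(:,2),\dots,W(:,r),d\})$ to $W(:,1)$ (which I verify for the chosen $W$), the minimal residual equals $(1-a)\norm{W(:,1)-d}_1=(1-a)(1-\beta)\kappa$, and feasibility of $\norm{\tilde{M}-\tilde{M}X}_1\le\rho\epsilon$ then requires $(1-a)(1-\beta)\kappa\le\rho\epsilon$, i.e.\ $a\ge 1-\frac{\rho\epsilon}{(1-\beta)\kappa}$. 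At $\epsilon=\frac{\kappa(1-\beta)\min(1,\rho)}{2\rho}$ this lower bound equals \emph{exactly} the selection threshold $1-\frac{\min(1,\rho)}{2}$ of Step~2 of Algorithm~\ref{hot2}. It remains to show the optimum attains it: using $d$ to supply column $1$ forces $X(d,d)\ge 1-a$, while $d$ itself is always reconstructible from the columns of $W$ at zero residual (via $X(1,d)=a\beta\le a$ and $X(j,d)=a(1-\beta)x^*_j$), so the objective along this family is $\mathrm{const}+(p_1-p_d)\,a+p_d$, which is strictly increasing in $a$ because $p_1>p_d$. Hence every optimal solution has $X^*(1,1)\le 1-\frac{\min(1,\rho)}{2}$, and the index of $W(:,1)$ is not selected.

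With $W(:,1)$ discarded, I conclude failure. The only column of $\tilde{M}$ within distance $(1-\beta)\kappa$ of $W(:,1)$ is $d$, and the remaining columns of $W$ are far from $W(:,1)$, so no selected column lies within $\epsilon$ of $W(:,1)$. Since $\norm{W(:,1)-d}_1=(1-\beta)\kappa>\epsilon$ (because $\frac{\min(1,\rho)}{2\rho}<1$ for every $\rho>0$), for every permutation $P$ the column of $\tilde{W}(:,P)$ matched to $W(:,1)$ is at $\ell_1$-distance exceeding $\epsilon$, whence $\norm{W-\tilde{W}(:,P)}_1>\epsilon$. This is exactly the failure of the recovery guarantee of Theorem~\ref{th1}, so $\epsilon<\frac{\kappa(1-\beta)\min(1,\rho)}{2\rho}$ is necessary.

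I expect the main obstacle to be the construction of $W$ itself, since the value $(1-\beta)\kappa$ is attainable only when the conditioning of $W(:,1)$ is realized by a genuine convex combination of the other columns (so that $d$ is a legitimate, correctly normalized data column), and when that $d$ is simultaneously the nearest combination to $W(:,1)$ and $W(:,1)$ is a least-conditioned column; reconciling these with $\max_{ij}H'_{ij}\le\beta$ for small $\beta$ is what dictates using $\gtrsim 1/\beta$ well-spread columns. The second, milder, difficulty is making the monotonicity argument for $X^*(1,1)$ fully rigorous, in particular ruling out that some qualitatively different feasible solution (e.g.\ one dropping several columns at once) beats the family analyzed above.
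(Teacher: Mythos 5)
Your overall strategy is the same as the paper's: build an adversarial instance with no noise ($N=0$) in which a data column sits at distance exactly $(1-\beta)\kappa$ from a column of $W$, choose $p$ so that the LP prefers to load weight onto that near-duplicate, exhibit a feasible $X$ with $X(1,1)=1-\frac{\rho\epsilon}{\kappa(1-\beta)}$, and observe that this crosses the selection threshold $1-\frac{\min(1,\rho)}{2}$ precisely at $\epsilon=\frac{\kappa(1-\beta)\min(1,\rho)}{2\rho}$. The paper does exactly this, but with an explicit $W=\left(\begin{smallmatrix}\frac{\kappa}{2}I_r\\(1-\frac{\kappa}{2})e_r^T\end{smallmatrix}\right)$ and $H=\left(I_r,\ \beta I_r+\frac{1-\beta}{r-1}(e_re_r^T-I_r)\right)$, for which $\kappa$-robust conicity, the margin bound $\max_{ij}H'_{ij}\le\beta$ (requiring $\beta\ge\frac1r$, the same $r\gtrsim1/\beta$ constraint you identify), and the distance $\norm{M(:,j)-M(:,j+r)}_1=\kappa(1-\beta)$ are all verified by direct computation. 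The two obstacles you flag at the end are genuine and are precisely what the paper's proof supplies, so they cannot be left as remarks.

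The more serious of the two is the optimality step. Showing that the objective is increasing in $a$ \emph{along your one-parameter family} does not establish that \emph{every} optimal solution of \eqref{gillisLP} satisfies $X^*(1,1)\le 1-\frac{\min(1,\rho)}{2}$: an optimal solution need not belong to that family, and the naive exchange argument (take any feasible $X$ with $X(1,1)>1-\frac{\rho\epsilon}{\kappa(1-\beta)}$ and lower $X(1,1)$) can break feasibility of \emph{other} columns through the coupling constraints $X(1,j)\le X(1,1)$, so a local improvement is not automatic. The paper closes this hole by taking $p=\left(\begin{smallmatrix}Ke_r\\e_r\end{smallmatrix}\right)$ with $K$ sufficiently large and invoking Lemma~7 of \cite{G12b}: for large $K$ the LP effectively minimizes $\sum_{k\le r}X(k,k)$ lexicographically first, which forces $\min_{1\le k\le r}X^*(k,k)\le\max_{1\le k\le r}X(k,k)=1-\frac{\rho\epsilon}{\kappa(1-\beta)}$ for the constructed feasible $X$, and that single deficient diagonal entry already defeats the algorithm (since every other column of the instance is at $\ell_1$-distance at least $(1-\beta)\kappa>\epsilon$ from the corresponding column of $W$, as in your final paragraph). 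To repair your write-up you would need either this large-$K$/lexicographic device or a fully worked exchange argument, together with a concrete $W$ for which the minimizing combination $x^*$ is a genuine convex combination with all coefficients at most $\beta/(1-\beta)$; the paper's explicit choice accomplishes both at once.
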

\begin{proof}
See Appendix~\ref{appAa}. 
\end{proof} 
For example,   Theorem~\ref{th1b} implies that, for $\rho = 1$, the bound of Theorem~\ref{th1} is tight up to a factor $\frac{15}{2}$.

  \subsection{Robustness Analysis with Duplicates and Near Duplicates} \label{rob}

  In case there are duplicates and near duplicates in the data set, it is necessary to apply a post-processing to the solution of \eqref{gillisLP}. 
  In fact, although we can guarantee that there is a subset of the columns of $\tilde{M}$ close to each column of $W$ whose sum of the corresponding diagonal entries of an optimal solution of~\eqref{gillisLP} is large,   there is no guarantee that the weight will be concentrated only in one entry.   
  It is then required to apply some post-processing based on the distances between the data points to the solution of~\eqref{gillisLP} (instead of simply picking the $r$ indices corresponding to its largest diagonal entries) in order to obtain a robust algorithm. 
  In particular, using Algorithm~\ref{cluster} to post-process the solution of~\eqref{rechtLP} leads to a more robust algorithm than Hottopixx \cite{G12b}.  Note that pre-processing would also be possible \cite{EMO12, AGKM11}.

 Therefore, we propose to post-process an optimal solution of \eqref{gillisLP} with Algorithm~\ref{cluster}; see Algorithm~\ref{postpro}, for which we can prove the following robustness result: 
   \begin{theorem} \label{th2}
  Let $M=WH$  be an $r$-separable matrix whose entries of each column sum to one and of the form~\eqref{sepM} with $H \geq 0$ and $W$ $\kappa$-robustly conical. 
  Let also $\tilde{M} = M+N$  with $\norm{N}_{1} \leq \epsilon$. If 
   \[ 
  \epsilon < \frac{\omega \kappa}{99(r+1)}, 
  \]  
  where $\omega = \min_{i \neq j} \norm{W(:,i)-W(:,j)}_1$,    
  then Algorithm~\ref{postpro} extracts a matrix $\tilde{W}$ such that 
  \[
  \norm{W - \tilde{W}(:,P)}_{1} \leq 49(r+1) \frac{\epsilon}{\kappa} + 2 \epsilon, \quad \text{   for some permutation $P$.} 
  \]
  \end{theorem}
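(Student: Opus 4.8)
The plan is to adapt the robustness analysis underlying Theorem~\ref{th1} to the duplicate-tolerant regime, following the clustering scheme of~\cite{G12b}. I assume without loss of generality that $\Pi = I_n$, so the first $r$ columns of $M$ are those of $W$, and I write $X^*$ for an optimal solution of~\eqref{gillisLP}. The argument has four parts: transferring the reconstruction guarantee from $\tilde M$ to the clean matrix $M$; showing that the diagonal mass of $X^*$ localizes near the true vertices; checking that Algorithm~\ref{cluster} separates the vertices into the correct clusters and selects one representative per vertex; and assembling the per-vertex error into the stated bound.

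First I would bound the clean reconstruction error. Since $X^*(i,j) \le X^*(i,i) \le 1$, every column sum of $X^*$ is at most $\tr(X^*)$, and comparing the objective value $p^T \diag(X^*)$ with that of the feasible separable point $X^0$ of~\eqref{sepM} (whose trace is $r$), together with the positivity of $p$, bounds $\tr(X^*)$, hence $\norm{X^*}_1$, by $O(r)$. Writing $M - MX^* = (\tilde M - \tilde M X^*) - N + N X^*$ and using $\norm{N}_1 \le \epsilon$ then gives $\norm{M - M X^*}_1 \le c\,(r+1)\epsilon$ for an absolute constant $c$; this is the source of the $(r+1)$ factor.

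The localization step is the technical heart, and the one I expect to be the main obstacle. Fixing a vertex $k$, the reconstruction constraint applied to column $k$ reads $\norm{\tilde M(:,k) - \tilde M X^*(:,k)}_1 \le \rho\epsilon$, and substituting $\tilde M = M + N$, $M = WH$ turns this into $\norm{W(:,k) - W y_k}_1 = O((r+1)\epsilon)$ with $y_k = H X^*(:,k) \ge 0$; the error term here again costs $\norm{N X^*(:,k)}_1 \le \epsilon\,\tr(X^*)$, which is why $r$ reappears. Since $W$ is $\kappa$-robustly conical, $W(:,k) \approx W y_k$ forces $y_k$ to concentrate on coordinate $k$, namely $y_k(k) \ge 1 - O((r+1)\epsilon/\kappa)$, which in turn lower-bounds the aggregate diagonal mass $\sum_{j} X^*(j,j)$ carried by columns close to $W(:,k)$. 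The delicate point, and the reason near-duplicates are hard, is that this mass may be spread arbitrarily among many columns clustered around the vertex, so the estimate must be on the total mass over a neighbourhood rather than on any single entry, and the weight leaked onto ``intermediate'' columns (close to no vertex yet useful for reconstruction) has to be charged against the $O((r+1)\epsilon)$ reconstruction budget.

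Finally I would run these estimates through Algorithm~\ref{cluster}. Distinct columns of $W$ are at distance $\ge \omega$, so their noisy copies are $\ge \omega - 2\epsilon$ apart, while two copies of the same vertex lie within $O((r+1)\epsilon/\kappa)$ of each other once the low-weight columns are discarded; the hypothesis $\epsilon < \omega\kappa/(99(r+1))$ makes the inter-vertex gap exceed twice this cluster radius, so the clustering produces exactly $r$ clusters, one per vertex. The per-vertex mass bound from the previous step ensures each true cluster clears the selection threshold, so all $r$ vertices are represented. Bounding the distance from each selected representative $\tilde M(:,j_k)$ to $W(:,k)$ by the cluster radius $O((r+1)\epsilon/\kappa)$ plus the noise $\epsilon$, and then pinning down the constants, yields $\norm{W - \tilde W(:,P)}_1 \le 49(r+1)\frac{\epsilon}{\kappa} + 2\epsilon$ for the induced permutation $P$, which completes the proof.
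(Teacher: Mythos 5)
Your overall strategy is legitimate but it is a very different, and much longer, road than the one the paper takes. The paper's entire proof is a reduction: it verifies that the matrix $X^0$ of Equation~\eqref{sepM} is feasible for~\eqref{gillisLP} with $\rho=2$ (using $M=MX^0$, $\norm{N}_1\le\epsilon$ and $\norm{X^0}_1=1$), deduces from $p=e$ that any optimal solution satisfies $\tr(X^*)\le p^T\diag(X^0)=r$, and then invokes Theorem~5 of~\cite{G12b} verbatim, observing that that theorem only requires $\tr(X)\le r$ (not $\tr(X)=r$) and does not need distinct entries in $p$; the constants $99$ and $49$ and the whole clustering analysis are inherited from that reference. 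What you are proposing is essentially to reprove the cited theorem from scratch. That is a coherent plan, and your first step (comparing objective values against $X^0$ to get $\tr(X^*)\le r$, then $\norm{M-MX^*}_1=O((r+1)\epsilon)$) is correct and matches the mechanism that produces the $(r+1)$ factor.

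However, as written the argument has a genuine gap exactly where you flag ``the main obstacle'', and flagging it is not the same as closing it. From $\norm{W(:,k)-Wy_k}_1=O((r+1)\epsilon)$ with $y_k=HX^*(:,k)$ and robust conicality you get $y_k(k)\ge 1-O((r+1)\epsilon/\kappa)$, but $y_k(k)=\sum_j H(k,j)X^*(j,k)$ is a sum weighted by $H(k,j)$ over \emph{all} columns, including columns with, say, $H(k,j)=1/2$ that are far from $W(:,k)$ and from every other vertex. Your proposal to ``charge'' their contribution ``against the $O((r+1)\epsilon)$ reconstruction budget'' does not work: such intermediate columns reconstruct $W(:,k)$ with error only of order $(1-H(k,j))\kappa$, so they incur no reconstruction penalty that could absorb their weight; one must instead split the sum according to the value of $H(k,j)$, use $X^*(j,k)\le X^*(j,j)$ together with the global budget $\sum_j X^*(j,j)\le r$, and show that the diagonal mass inside a ball of the right radius around $W(:,k)$ exceeds the precise threshold $\tfrac{r}{r+1}$ used in Algorithm~\ref{cluster} --- an $O(\cdot)$ estimate is not enough because the threshold and the budget differ only by $\tfrac{r}{r+1}$. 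Two smaller omissions: you never verify feasibility of $X^0$ for~\eqref{gillisLP} (which is what licenses the comparison of objective values), and you do not check that Algorithm~\ref{cluster}, which is called without $r$ in Algorithm~\ref{postpro}, recovers the correct rank via $r=\lceil\sum_i X^*(i,i)\rceil$; the paper derives this from $r-\tfrac{r}{r+1}<\sum_i X^*(i,i)\le r$. Finally, the stated constants $99$ and $49$ are never derived, so even granting the qualitative steps the theorem as stated is not established.
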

\begin{proof}
See Appendix~\ref{appB} (for simplicity, we only consider the case $\rho = 2$; the proof can be generalized for other values of $\rho > 0$ in a similar way as in Theorem~\ref{th1}). 
\end{proof}

This robustness result follows directly from \cite[Theorem 5]{G12b}, and is the same as for the algorithm using the optimal solution of \eqref{rechtLP} post-processed with Algorithm~\ref{cluster}. 
Hence, in case there are duplicates and near duplicates in the data set, we do not know if Algorithm~\ref{postpro} is more robust, although we believe the bound for Algorithm~\ref{postpro} can be improved (in particular, that the dependence in $r$ can be removed), this is a topic for further research.

\algsetup{indent=2em}
\begin{algorithm}[ht!]
\caption{Extracting Columns of a Noisy Separable Matrix using Linear Optimization} \label{postpro}
\begin{algorithmic}[1] 
\REQUIRE A normalized $r$-separable matrix $\tilde{M} = WH + N$, and the noise level   
$\norm{N}_{1} \leq \epsilon$. 
\ENSURE An $m$-by-$r$ matrix $\tilde{W}$ such that $\tilde{W} \approx {W}$ (up to permutation).  \medskip 
\STATE Compute the optimal solution $X^*$ of \eqref{gillisLP} where $p = e$ is the vector of all ones and $\rho = 2$. 
\STATE $K$ = post-processing$\left( \tilde{M},\diag(X^*),\epsilon \right)$; 
\STATE $\tilde{W} = \tilde{M}(:,\mathcal{K})$; 
\end{algorithmic} 
\end{algorithm}  
\algsetup{indent=2em}
\begin{algorithm}[ht!]
\caption{Post-Processing - Clustering Diagonal Entries of $X^*$ \cite{G12b} \label{cluster}}
\begin{algorithmic}[1] 
\REQUIRE A matrix $\tilde{M} \in \mathbb{R}^{m \times n}$, a vector $x \in \mathbb{R}^{n}_+$, $\epsilon \geq 0$, and possibly a factorization rank $r$. 
\ENSURE A index set $\mathcal{K}^*$ with $r$ indices so that the columns of $\tilde{M}(:,\mathcal{K}^*)$ are centroids whose corresponding clusters have large weight (the weights of the data points are given~by~$x$). \medskip 
\STATE  $D(i,j) = \norm{m_i-m_j}_1$ for $1 \leq i, j \leq n$; 
\IF{ $r$ is not part of the input }
\STATE $r = \Big\lceil  \sum_i x(i) \Big\rceil$;   
\ELSE \STATE $x \leftarrow r \frac{x}{\sum_i x(i)}$; 
\ENDIF
\STATE $\mathcal{K} = \mathcal{K}^* = \left\{ k \ | \ x(k) > \frac{r}{r+1} \right\}$ 
and $\nu = \nu^* = \max\left( 2 \epsilon, \min_{\{(i,j) | D(i,j) > 0\}} D(i,j)  \right)$;  \vspace{0.1cm}  
\WHILE{$|\mathcal{K}| < r$ and $\nu < \max_{i,j} D(i,j)$} 
\STATE  $\mathcal{S}_i = \{ j \ | \ D(i,j) \leq \nu \}$ for $1 \leq i \leq n$; 
\STATE  $w(i) = \sum_{j \in \mathcal{S}_i} x(j)$ for $1 \leq i \leq n$; 
\STATE $\mathcal{K} = \emptyset$; 
\WHILE{$\max_{1 \leq i \leq n} w(i) > \frac{r}{r+1}$} 
\STATE $k = \argmax w(i)$; $\mathcal{K} \leftarrow \mathcal{K} \cup \{ k \}$; 
\STATE For all $1 \leq i \leq n$ and $j \in \mathcal{S}_k \cup \mathcal{S}_i$ : $w(i) \leftarrow w(i)-x(j)$; 
\ENDWHILE 
\IF{$|\mathcal{K}| > |\mathcal{K}^*|$}
\STATE $\mathcal{K}^* = \mathcal{K}$; $\nu = \nu^*$;
\ENDIF 
\STATE $\nu \leftarrow 2 \nu$; 
\ENDWHILE 
\STATE \emph{\% Safety procedure in case the conditions of Theorem~\ref{th2} are not satisfied: }
\IF{$|\mathcal{K}^*| < r$} 
\STATE $\text{d} = \max_{i,j} D(i,j)$; 
\STATE  $\mathcal{S}_i = \{ j \ | \ D(i,j) \leq \nu^* \}$ for $1 \leq i \leq n$; 
\STATE  $w(i) = \sum_{j \in \mathcal{S}_i} x(j)$ for $1 \leq i \leq n$; 
\STATE $\mathcal{K}^* = \emptyset$;
\WHILE{$|\mathcal{K}^*| < r$} 
\STATE $k = \argmax w(i)$; $\mathcal{K}^* \leftarrow \mathcal{K}^* \cup \{ k \}$; 
\STATE For all $1 \leq i \leq n$, and $j \in \mathcal{S}_k \cup \mathcal{S}_i$ : 
$w(i) \leftarrow w(i) -  \left( \frac{\text{d}-D(i,j)}{\text{d}} \right)^{0.1}  x(j)$; 
\STATE $w(k) \leftarrow 0$; 
\ENDWHILE  
\ENDIF 
\end{algorithmic} 
\end{algorithm}  

\begin{remark}[Choice of $p$] Although Theorem~\ref{th2} requires the entries of the vector $p$ to be all ones, we recommend to take the entries of $p$ distinct, but close to one. 
 This allows the LP~\eqref{gillisLP} to discriminate better between the duplicates hence Algorithm~\ref{postpro} does not necessarily have to enter the post-processing loop. 
 We suggest to use 
$p(i) \sim 1 + \mathcal{U}(-\sigma,\sigma)$ for all $i$, where 
$\sigma \ll 1$ 
and $\mathcal{U}(a,b)$ is the uniform distribution in the interval $[a,b]$. 
\end{remark}

\section{Handling Outliers} \label{outliers}

Removing the rank constraint has another advantage: it allows to deal with
outliers.  If the data set contains outliers, the corresponding diagonal entries
of an optimal solution $X^*$ of~\eqref{gillisLP} will have to be large (since
outliers cannot be approximated well with convex combinations of points in the
data set).  However, under some reasonable assumptions, outliers are useless to
approximate data points, hence off-diagonal entries of the  rows of $X^*$
corresponding to outliers will be small. Therefore, one could discriminate
between the columns of $W$ and the outliers by looking at the
\emph{off-diagonal entries of~$X^*$}.  
This result is closely related to the one presented in \cite{GV12} (Section 3).
For simplicity, we consider in this section only the case where $\rho = 2$ and
assume absence of duplicates and near-duplicates in the data set; the more
general case can be treated in a similar way. 

Let the columns of $T \in \mathbb{R}^{m \times t}$ be $t$ outliers added to the separable matrix $W[I_r, H']$ along with some noise to obtain
\begin{align}
\tilde{M} = M+N \; \text{ where } \; M &  = [ W, \, T ]  H  
= \left[ W, \, T, \, WH' \right] \Pi 
\left[ \begin{array}{ccc} 
I_r & 0_{r \times t} & H' \\ 
0_{t \times r} & I_t   &  0_{t \times r}   \end{array} \right] \Pi,  \label{modelo}  
\end{align} 
which is a noisy $r$-separable matrix containing $t$ outliers. We propose Algorithm~\ref{hotout} to  approximately extract the columns of $W$ among the columns of $\tilde{M}$. 
\algsetup{indent=2em}
\begin{algorithm}[ht!]
\caption{Extracting Columns of a Noisy Separable Matrix with Outliers using Linear Optimization} \label{hotout} 
\begin{algorithmic}[1] 
\REQUIRE A normalized noisy $r$-separable matrix $\tilde{M} = [W, T, WH']  \Pi + N \in \mathbb{R}^{m \times n}_+$ with outliers, the noise level $\norm{N}_1 \leq \epsilon$ and a vector $p \in \mathbb{R}^n$ with positive distinct entries and  $\rho = 2$. 
\ENSURE An $m$-by-$r$ matrix $\tilde{W}$ such that $\tilde{W} \approx {W}$ (up to permutation). \medskip 
\STATE Compute the optimal solution $X^*$ of \eqref{gillisLP} 
where $p$ has distinct positive entries. 
\STATE Let $\mathcal{K} = \left\{ 1 \leq k \leq n \; | \; X^*(k,k) \geq \frac{1}{2} 
\text{ and } \norm{X^*(k,:)}_1 - X^*(k,k)  \geq \frac{1}{2}  \right\}$. 
\STATE $\tilde{W} = \tilde{M}(:,\mathcal{K})$. 
\end{algorithmic} 
\end{algorithm} 

In order for Algorithm~\ref{hotout} to extract the correct set of columns of $\tilde{M}$, the off-diagonal entries of the rows corresponding to the columns of $T$ (resp.\@ columns of $W$) must be small (resp.\@ large). This can be guaranteed using the following conditions (see also Theorem~\ref{outth} below): 

\begin{itemize}

\item The angle between the cone generated by the columns of $T$ and the columns space of $W$ is positive.  More precisely, we will assume that for all $1 \leq k \leq t$ 
\begin{equation} \label{angle}
\min_{ 
\begin{array}{c} x \in \mathbb{R}^t_+, x(k) = 1, \\ y \in \mathbb{R}^r \end{array} 
} 
\norm{Tx - Wy}_1 \; \geq \; \eta > 0 . 
\end{equation} 
In fact, if a nonnegative linear combination of outliers (that is, $Tx$  with $x \geq 0$)  belongs to the column space of $W$, then some data points can usually be reconstructed using a non-zero weight for these outliers (it suffices that some data points belong to the convex hull of some columns of $W$ and that linear combination of outliers). 

  \item The matrix $[W,T]$ is robustly conical, otherwise some columns of $T$ could be reconstructed using other columns of $T$ whose corresponding rows could hence have large off-diagonal entries.

\item Each column of $W$ is necessary to reconstruct at least one data point, otherwise the off-diagonal entries of the row of $X^*$ corresponding to that `useless' column of $W$ will be small, possibly equal to zero, and it cannot be distinguished from an outlier.  
More formally, for all $1 \leq k \leq r$, there is a least one data point $M(:,j) = WH(:,j) \neq W(:,k)$ such that 
\begin{equation} \label{gc}
\min_{x \geq 0, y \geq 0} \norm{M(:,j) - Tx - W(:,{\mathcal{K}}) y}_1 \geq \delta, \quad \text{where ${\mathcal{K}} = \{1,2,\dots,r\} \backslash \{k\}$}. 
\end{equation} 
If Equation \eqref{angle} holds, this condition is satisfied for example when $\conv(W)$ is a simplex and some points lie inside that simplex (it is actually satisfied if and only if each column of $W$ define with other columns of $W$ a simplex containing at least one data point in its interior).  

\end{itemize}

These conditions allow to distinguish the columns of $W$ from the outliers using off-diagonal entries of an optimal solution $X^*$ of \eqref{gillisLP}: 
\begin{theorem} \label{outth} 
Suppose $\tilde{M} = M+N$ where the entries of each column of $M$ sum to one, $M = [W,T]H$ has the form~\eqref{modelo} with $H \geq 0$, \mbox{$\max_{ij} H'_{ij} \leq \beta \leq 1$} and $[W, T]$ $\kappa$-robustly conical, and $\norm{N}_{1} \leq \epsilon$. 
Suppose also that $M$, $W$ and $T$ satisfy Equations~\eqref{angle} and~\eqref{gc} for some $\eta > 0$ and $\delta > 0$. 
If  
\[
\epsilon \leq \frac{\nu (1-\beta)}{20 (n-1)} \quad \text{ where $\nu = \min(\kappa,\eta,\delta)$}, 
\]
then Algorithm~\ref{hotout} extracts a matrix $\tilde{W} \in \mathbb{R}^{m \times r}$  satisfying $\norm{W - \tilde{W}(:,P)}_{1} \leq \epsilon$ for some permutation~$P$. 
\end{theorem}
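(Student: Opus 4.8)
The plan is to show that the index set $\mathcal{K}$ produced in Step~2 of Algorithm~\ref{hotout} coincides exactly with the $r$ indices corresponding to the columns of $W$. Partition the columns of $\tilde M$ into three groups according to \eqref{modelo}: the set $\mathcal{W}$ of the $r$ columns perturbing columns of $W$, the set $\mathcal{T}$ of the $t$ columns perturbing outliers, and the set $\mathcal{D}$ of the remaining ``interior'' data points $\tilde M(:,j)$ with $M(:,j) \in \cone(W)$. Since $\norm{\cdot}_1$ is the maximum absolute column sum, the feasibility constraint of \eqref{gillisLP} with $\rho = 2$ gives the per-column bound $\norm{\tilde M(:,j) - \tilde M X^*(:,j)}_1 \le 2\epsilon$ for every $j$, and, because the columns of $M$ sum to one, a short computation bounds each column mass $\norm{X^*(:,j)}_1$ within $O(\epsilon)$ of one. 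The goal is then two diagonal estimates and two off-diagonal row-sum estimates, one per group, so that the two thresholds in Step~2 isolate $\mathcal{W}$.

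First I would control the diagonal, reusing the machinery behind Theorem~\ref{th1}. Because $[W,T]$ is $\kappa$-robustly conical, each $\tilde M(:,k)$ with $k \in \mathcal{W} \cup \mathcal{T}$ perturbs an extreme ray of $\cone([W,T])$; substituting $\tilde M = M+N$ into the reconstruction identity and writing $H X^*(:,k)$ for the induced coefficient vector on the columns of $[W,T]$, the robustly-conical inequality forces the self-weight, hence $X^*(k,k)$, to satisfy $X^*(k,k) \ge 1 - O\!\big(\tfrac{\epsilon}{\kappa(1-\beta)}\big)$, which exceeds $\tfrac12$ in the regime $\epsilon \le \frac{\nu(1-\beta)}{20(n-1)}$. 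For the data points I would use optimality: the matrix $X^0$ of \eqref{modelo} that selects $\mathcal{W} \cup \mathcal{T}$ on its diagonal and reconstructs each interior point from the columns of $W$ is feasible for $\rho = 2$, so $p^T\diag(X^*) \le p^T\diag(X^0) = \sum_{k \in \mathcal{W}\cup\mathcal{T}} p_k$. Combined with the lower bound on $\mathcal{W}\cup\mathcal{T}$ and the positivity of $p$, this bounds the total mass $\sum_{j \in \mathcal{D}} X^*(j,j)$ by $O\!\big(\tfrac{(r+t)\epsilon}{\kappa(1-\beta)}\big) = O(1/20)$; in particular every $X^*(j,j) < \tfrac12$, so data points fail the first threshold, and, crucially, since $X^*(j,\cdot) \le X^*(j,j)$ the entire rows of data points carry negligible total mass.

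The heart of the argument is the off-diagonal row sum, which must be small on $\mathcal{T}$ and large on $\mathcal{W}$. For the reconstruction of any column $j$, the coefficient placed on the outlier $T(:,k'')$ equals exactly $X^*(k,j)$, being the $T$-component of $H X^*(:,j)$, to which only the outlier column $k \in \mathcal{T}$ contributes. When $j \in \mathcal{W} \cup \mathcal{D}$ the point $M(:,j)$ lies in $\cone(W)$, so rearranging the reconstruction identity exhibits a nonnegative combination of outliers with unit weight on $T(:,k'')$ lying within $O(\epsilon)/X^*(k,j)$ of $\col(W)$; the angle condition \eqref{angle} then yields $X^*(k,j) \le O(\epsilon/\eta)$, while the robustly-conical bound handles $j \in \mathcal{T}$, so the off-diagonal row sum of an outlier is at most $(n-1)\,O(\epsilon/\nu) \le O(1/20) < \tfrac12$. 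This is precisely where the factor $(n-1)$ and the constant $20$ in the hypothesis on $\epsilon$ originate. For a column $k \in \mathcal{W}$, condition \eqref{gc} supplies a data point $j_0$ whose reconstruction is off by at least $\delta$ without $W(:,k')$; since the weight of $W(:,k')$ available through the (negligible) data-point rows is tiny, this weight must be supplied directly, forcing $X^*(k,j_0)$, and hence the off-diagonal row sum of $k$, to be bounded below in terms of $\delta$.

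I expect this last lower bound to be the main obstacle. The outlier estimate is purely feasibility-based and comfortably beats $\tfrac12$, but \eqref{gc} only guarantees off-diagonal mass of order $\delta$ on a true column of $W$, and reconciling this with the fixed threshold $\tfrac12$ is delicate: one must argue that, once the data-point rows are shown to be negligible, the weight of $W(:,k')$ needed across the interior points cannot be rerouted through other columns, so that the accumulated off-diagonal mass of row $k$ clears $\tfrac12$. Once both thresholds are shown to select precisely $\mathcal{K} = \mathcal{W}$, the conclusion is immediate: taking the matching permutation $P$, each extracted column satisfies $\norm{W(:,\cdot) - \tilde M(:,k)}_1 = \norm{N(:,k)}_1 \le \norm{N}_1 \le \epsilon$, whence $\norm{W - \tilde W(:,P)}_1 \le \epsilon$.
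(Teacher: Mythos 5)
Your architecture matches the paper's: separate the diagonal entries of $\mathcal{W}\cup\mathcal{T}$ from those of $\mathcal{D}$ using the $\kappa$-robust conicality of $[W,T]$ (Theorem~\ref{th1} applied as if the outliers were ordinary vertices), then bound the off-diagonal row sums of outlier rows from above and of $W$-rows from below. Your treatment of the outlier rows is essentially the paper's: for $j$ with $M(:,j)\in\cone(W)$, condition~\eqref{angle} gives $\eta\, X^*(k,j)\le\norm{M(:,j)-MX^*(:,j)}_1\le \frac{4\epsilon}{1-\epsilon}$, and summing $n-1$ such terms is where the factor $(n-1)$ enters. One secondary difference: you bound the diagonal mass on $\mathcal{D}$ by comparing $p^T\diag(X^*)$ with $p^T\diag(X^0)$, which introduces an uncontrolled dependence on the ratio of the entries of $p$ (the theorem only assumes $p$ positive and distinct); the paper instead uses the per-column swap argument of Lemma~\ref{lem3}, which gives $X^*(j,j)\le\frac{8\epsilon}{\kappa(1-\beta)(1-\epsilon)}$ for each $j\in\mathcal{D}$ individually, independently of $p$.

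The genuine gap is exactly the step you flag: the lower bound on the off-diagonal row sum of a row $k$ corresponding to a column of $W$. Your reading of~\eqref{gc} --- that it only forces off-diagonal mass ``of order $\delta$'' --- is not how the paper closes the argument, and a bound of order $\delta$ would indeed not clear the fixed threshold $\frac12$. The paper shows that for the witness data point $j$ of~\eqref{gc}, the \emph{single} entry $X^*(k,j)$ already exceeds $\frac12$: letting $\mu=\frac{8(n-r-t)\epsilon}{\kappa(1-\beta)(1-\epsilon)}$ bound the total weight routed through non-vertex columns, condition~\eqref{gc} forces any reconstruction of $M(:,j)$ to incur error at least $(1-\mu)\bigl(1-\frac{X^*(k,j)}{1-\mu}\bigr)\delta$, while the achieved error is at most $\frac{4\epsilon}{1-\epsilon}$; hence $X^*(k,j)\ge 1-\mu-\frac{4\epsilon}{\delta(1-\epsilon)}\ge 1-\frac{8(n-1)\epsilon}{\nu(1-\beta)(1-\epsilon)}\ge\frac12$. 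In words, because the LP reconstructs $M(:,j)$ to accuracy $O(\epsilon)\ll\delta$ and, by~\eqref{gc}, this is impossible without nearly full weight on $W(:,k)$, the coefficient on $W(:,k)$ must be within $O(\epsilon/\delta)+\mu$ of one --- not merely $\Omega(\delta)$. This is the missing quantitative idea, and it is precisely where the hypothesis $\epsilon\le\frac{\nu(1-\beta)}{20(n-1)}$ with $\nu\le\delta$ is consumed for the $W$-rows; without it your proposal does not establish that Step~2 of Algorithm~\ref{hotout} retains the columns of $W$.
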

\begin{proof}
See Appendix~\ref{appout}. 
\end{proof}

Unfortunately, the factor $\frac{1}{n-1}$ is necessary because a row of $X^*$ corresponding to an outlier 
could potentially be assigned weights proportional to $\epsilon$ for all off-diagonal entries.   
For example, if all data points are perturbed in the direction of an outlier, that is, 
$N(:,j) = \epsilon \, T(:,k)$ for all $j$ and for some $1 \leq k \leq t$, then we could have $\sum_{j \neq k} X(k,j) = (n-1) \mathcal{O}(\epsilon)$ hence it is necessary that $\epsilon \leq \mathcal{O}(n^{-1})$ (although it is not likely to happen in practice). 
A simple way to improve the bound is the following: 
\begin{itemize}

\item Identify the vertices and outliers using  $\mathcal{K} =  \left\{ 1 \leq k \leq n \; | \; X^*(k,k) \geq \frac{1}{2}  \right\}$ (this only requires \mbox{$\epsilon \leq \frac{\kappa (1-\beta)}{20}$}, cf.\@ Theorem~\ref{th1}). 

\item Solve the linear program $Z^* = \argmin_{Z \geq 0} \norm{M-M(:,\mathcal{K})Z}_1$. 

\item Use the sum of the rows of $Z^*$ (instead of $X^*)$ to identify the columns of $W$. 

\end{itemize} 
Following the same steps as in the proof of Theorem~\ref{outth}, the bound for $\epsilon$ for the corresponding algorithm becomes $\epsilon \leq \frac{\nu (1-\beta)}{20 (r+t-1)}$. 

\begin{remark}[Number of outliers] 
Algorithm~\ref{hotout} does not require the number of outliers as an input.  
Moreover, the number of outliers is not limited hence our result is stronger than the one of \cite{GV12} where the number of outliers cannot exceed $m - r$ (because $T$ needs to be full rank, while we only need $T$ to be robustly conical and the cone generated by its columns define a wide angle with the column space of~$W$). 
\end{remark}

\begin{remark}[Hottopixx and outliers] 
Replacing the constraint $\tr(X) = r$ with $\tr(X) = r + t$ ($r$ is the number of columns of $W$ and $t$ is the number of outliers) in the LP model~\eqref{rechtLP} allows to deal with outliers. However, the number of outliers plus the number of columns of $W$ (that is, $r+t$) has to be estimated, which is rather impractical. 
\end{remark}

  \section{Avoiding Column Normalization}  \label{colnorm}
  
  In order to use the LP models \eqref{rechtLP} and \eqref{gillisLP}, normalization must be enforced which may introduce significant distortions in the data set and lead to poor performances \cite{KSK12}.  
	If $M$ is $r$-separable but the entries of each column do not sum to one, we still have that 
  \[
  M = W[I_r, H'] \Pi = [W, WH'] \Pi = [W, WH'] \left( \begin{array}{cc} 
I_{r} & H'  \\ 
0_{(n-r) \times r} & 0_{(n-r) \times (n-r)} \end{array} \right) \Pi = M X^0. 
  \]
  However, 
  the constraints 
  $X(i,j) \leq X(i,i)$ for all $i,j$  
 in the LP's \eqref{rechtLP} and \eqref{gillisLP} are not necessarily satisfied by the matrix $X^0$, because the entries of $H'$ can be arbitrarily large.   
 
  Let us denote $\tilde{M}_o$ the original unnormalized noisy data matrix,  and  its normalized version $\tilde{M}$, with 
  \[
  \tilde{M}(:,j) = \frac{\tilde{M}_o(:,j)}{\norm{\tilde{M}_o(:,j)}_1} \quad \text{ for all } j.  
  \]  
  Let us also rewrite the LP \eqref{gillisLP} in terms of $\tilde{M}_o$ instead of $\tilde{M}$ using the following change of variables 
  \begin{equation} \label{changeVar}
 X_{ij} =   \frac{\norm{\tilde{M}_o(:,i)}_1}{\norm{\tilde{M}_o(:,j)}_1}   Y_{ij}   \quad \text{ for all $i$, $j$} . 
  \end{equation}
  Note that  $Y_{ii} = X_{ii}$ for all $i$. We have for all $j$ that 
  \begin{align*}
  \left\|  \tilde{M}(:,j) - \sum_i \tilde{M}(:,i) X_{ij}\right\|_{1}
  & = \left\| \frac{\tilde{M}_o(:,j)}{\norm{\tilde{M}_o(:,j)}_1}  - \sum_j \frac{\tilde{M}_o(:,i)}{\norm{\tilde{M}_o(:,i)}_1} \frac{\norm{\tilde{M}_o(:,i)}_1}{\norm{\tilde{M}_o(:,j)}_1} Y_{ij} \right\|_{1} \\ 
  & = \frac{1}{\norm{\tilde{M}_o(:,j)}_1} \left\| {\tilde{M}_o(:,j)}  - \sum_j {\tilde{M}_o(:,i)} Y_{ij} \right\|_{1} ,  
    \end{align*}
 which proves that the following LP  
  \begin{align} 
\min_{Y \in \mathcal{Y}} & \quad p'^T \diag(Y)  \quad
\text{ such that } \; {\norm{\tilde{M}_o(:,j) - \tilde{M}_o Y(:,j)}_{1} \leq \rho \epsilon  \norm{\tilde{M}_o(:,j)}_1 \; \text{ for all $j$}}   \label{gillisLP1} , 
\end{align}
where 
\begin{equation} \label{yset}
\mathcal{Y} = 
\{ Y \in \mathbb{R}^{n \times n}_+ 
\ | \ 
Y(i,i) \leq 1  \;  \forall \, i, 
\; \text{ and } \; 
{\norm{\tilde{M}_o(:,i)}_1 Y(i,j) \leq \norm{\tilde{M}_o(:,j)}_1 Y(i,i) \; \forall \, i,j}   
\} , 
\end{equation} 
is equivalent to the LP~\eqref{gillisLP}. 
  This shows that the LP~\eqref{gillisLP} looks for an approximation $\tilde{M}_oY$ of $\tilde{M}_o$ 
   with small \emph{relative error}, which is in general not desirable in practice. 
   For example, a zero column to which some noise is added will have to be approximated rather well, while it does not bring any valuable information. Similarly, the columns of $M$ with large norms will be given relatively less importance while they typically contain a more reliable information (e.g., in document data sets, they correspond to longer documents).

It is now easy to modify the LP~\eqref{gillisLP1} to handle other noise models.  
For example, if the noise added to each column of the input data matrix is independent of its norm, then one should rather use the following LP trying to find an approximation $\tilde{M}_oY$ of  $\tilde{M}_o$ with small \emph{absolute error}:  
\begin{align} 
\min_{Y \in  \mathcal{Y}} & \; \; p^T \diag(Y)  
\quad \text{ such that } \quad \norm{\tilde{M}_o - \tilde{M}_o Y}_1 \leq \rho \epsilon.  \label{rechtgen}  
\end{align}

  \begin{remark}[Other noise models] 
Considering other noise models depending on the problem at hand is also possible: one has to replace the constraint $\norm{\tilde{M}_o - \tilde{M}_o Y}_{1} \leq \rho \epsilon$ with another appropriate constraint. For example, using any $\ell_q$-norm with $q \geq 1$ leads to efficiently solvable convex optimization programs \cite{FT04}, that is, using 
 \[
 \norm{\tilde{M}_o(:,j) - \tilde{M}_o Y(:,j)}_{q} \leq \rho \epsilon, \quad \text{ for all $j$}. 
 \] 
 Another possibility is to assume that the noise is distributed among all the entries of the input matrix independently 
and one could use instead  
 $\sqrt[q]{\sum_{i,j} \left( \tilde{M}_o - \tilde{M}_o Y \right)_{ij}^q}  \leq \rho \epsilon$, e.g., $\norm{\tilde{M}_o - \tilde{M}_o Y}_{F} \leq \rho \epsilon$ for Gaussian noise (where $||.||_F$ is the Frobenius norm of a matrix with $q = 2$). 
 \end{remark}

  \section{Numerical Experiments} \label{xp}

  In this section, we present some numerical experiments in which we compare our new LP model~\eqref{rechtgen} with Hottopixx and two other state-of-the-art methods.  First we describe a practical twist to Algorithm~\ref{cluster}, which we routinely apply in the experiments to LP-based solutions.

  \subsection{Post-Processing of LP solutions}
  
Recall that the LP-based algorithms return a nonnegative matrix $X$ whose diagonal entries indicate the importance of the corresponding columns of the input data matrix $\tilde{M}$. 
  As explained earlier, there are several ways to extract $r$ columns from $\tilde{M}$ using this information, the simplest being to select the columns corresponding to the $r$ largest diagonal entries of $X$ \cite{BRRT12}. 
  Another approach is to take into account the distances between the columns of $\tilde{M}$ and cluster them accordingly; see Algorithm~\ref{cluster}. 
  In our experiments we have not observed that one method dominates the other (although in theory, when the noise level is sufficiently small, Algorithm~\ref{cluster} is more robust; see \cite{G12b}). Therefore, the strategy we employ in the experiments below selects the best solution out of the two post-processing strategies based on the residual error, see Algorithm~\ref{hybpost}. 
       \algsetup{indent=2em}
\begin{algorithm}[ht!]
\caption{Hybrid Post-Processing for LP-based Near-Separable NMF Algorithms \label{hybpost}}
\begin{algorithmic}[1] 
\REQUIRE A matrix $M \in \mathbb{R}^{m \times n}$, 
a factorization rank $r$, a noise level $\epsilon$, and a vector of weight $x \in \mathbb{R}^{n}_+$. 
\ENSURE An index set $\mathcal{K}$ such that $\min_{H \geq 0} \norm{M-M(:,\mathcal{K})H}_F$ is small.  \medskip 

\STATE \emph{\% Greedy approach}
\STATE $\mathcal{K}_1$ is the set of the $r$ largest indices of $x$; 

\STATE \emph{\% Clustering using Algorithm~\ref{cluster}}
\STATE $\mathcal{K}_2$ = Algorithm~\ref{cluster}$\left( \tilde{M}, x, \epsilon, r \right)$; 

\STATE \emph{\% Select the better of the two}
\STATE $\mathcal{K} = \argmin_{\mathcal{R} \in \{\mathcal{K}_1, \mathcal{K}_2\}} \min_{H \geq 0} \norm{M-M(:,\mathcal{R})H}_F^2$;
\end{algorithmic} 
\end{algorithm}

  \subsection{Algorithms} 
  \label{sec:numexp_algs}

  In this section, we compare the following near-separable NMF algorithms: 
  \begin{enumerate}
  
  \item \textbf{Hottopixx} \cite{BRRT12}. Given the noise level $\norm{N}_1$ and the factorization rank $r$, it computes the optimal solution $X^*$ of the LP~\eqref{rechtLP} (where the input matrix $\tilde{M}$ has to be normalized) and returns the indices obtained using Algorithm~\ref{hybpost}. The vector $p$ in the objective function was randomly generated using the \texttt{randn} function of Matlab. 
  The algorithm of \cite{AGKM11} was shown to perform worse than Hottopixx \cite{BRRT12} hence we do not include it here (moreover, it requires an additional parameter $\alpha$ related to the conditioning of $W$ which is difficult to estimate in practice). 
  
  \item \textbf{SPA} \cite{MC01}. The successive projection algorithm (SPA) extracts recursively $r$ columns of the input normalized matrix $\tilde{M}$ as follows: at each step, it selects the column with maximum $\ell_2$ norm, and then projects all the columns of $\tilde{M}$ on the orthogonal complement of the extracted column. This algorithm was proved to be robust to noise \cite{GV12}. 
  (Note that there exist variants where, at each step, the column is selected according to other criteria, e.g., any $\ell_p$ norm with $1 < p < +\infty$. This particular version of the algorithm using $\ell_2$ norm actually dates back from modified Gram-Schmidt with column pivoting, see \cite{GV12} and the references therein.)   
  SPA was shown to perform significantly better on several synthetic data sets than Hottopixx and several state-of-the-art algorithms from the hyperspectral image community \cite{GV12} (these algorithms are based on the pure-pixel assumption which is equivalent to the separability assumption, see Introduction).

  \item \textbf{XRAY} \cite{KSK12}. In \cite{KSK12}, several fast conical hull algorithms are proposed. 
  We use in this paper the variant referred to as $max$, because it performs in average the best on synthetic data sets.  
  Similarly as SPA, it recursively extracts $r$ columns of the input unnormalized matrix $\tilde{M}_o$:  at each step, it selects a column of $\tilde{M}_o$ corresponding to an extreme ray of the cone generated by the columns of  $\tilde{M}_o$, and then projects all the columns of $\tilde{M}_o$ on the cone generated by the columns of $\tilde{M}_o$ extracted so far.  
  XRAY was shown to perform much better than Hottopixx and similarly as SPA on synthetic data sets 
  (while performing better than both for topic identification in document data sets as it does not require column normalization).  However, it is not known whether XRAY is robust to noise. 
   
  \item \textbf{LP~\eqref{rechtgen}} with $\rho = 1,2$.  Given the noise level $\norm{N}_1$, it computes the optimal solution $X^*$ of the LP~\eqref{rechtgen} and returns the indices obtained with the post-processing described in Algorithm~\ref{hybpost}. (Note that we have also tried $\rho = \frac{1}{2}$ which performs better than $\rho = 2$ but slightly worse than $\rho = 1$ in average hence we do not display these results here.)

  \end{enumerate}

  Table~\ref{comptable} gives the following information for the different algorithms: computational cost, memory requirement, parameters and if column normalization of the input matrix is necessary. 
  \begin{table}[ht!] 
\begin{center} 
\begin{tabular}{|c||c|c|c|c|}
\hline 
  & Flops    & Memory    &  Parameters &  Normalization     \\  \hline  \hline 
Hottopixx  & $\Omega\left(m n^2\right)$ &  $\mathcal{O}\left(mn + n^2\right)$ & $\norm{N}_1$, $r$ & Yes \\ [5pt] 
SPA  & 2mnr + $\mathcal{O}\left(m r^2\right)$ &	   $\mathcal{O}\left(mn\right)$  & $r$ & Yes \\ [5pt] 
XRAY   & $\mathcal{O}\left(mnr\right)$    & $\mathcal{O}\left(mn\right)$  & $r$ & No \\ [5pt] 
LP~\eqref{rechtgen} 
 & $\Omega\left(m n^2\right)$  &  $\mathcal{O}\left(mn + n^2\right)$ & $\norm{N}_1$ & No \\ [5pt] 
\hline  
\end{tabular} 
\caption{Comparison of robust algorithms for near-separable NMF for a dense $m$-by-$n$ input matrix.}  
\label{comptable}
\end{center}
\end{table}  
  
    The LP have been solved using the IBM ILOG CPLEX Optimizer\footnote{Available for free at \url{http://www-01.ibm.com/software/integration/optimization/cplex-optimizer/} for academia.} on a standard Linux box. Because of the greater complexity of the LP-based approaches (formulating \eqref{rechtLP} and \eqref{rechtgen} as LP's requires $n^2 + mn$ variables), 
    the size of the input data matrices allowed on a standard machine is limited, roughly $mn^2 \sim 10^6$ (for example, on a two-core machine with 2.99GHz and 2GB of RAM, it already takes about one minute to process a 100-by-100 matrix using CPLEX). 
    In this paper, we mainly focus on the robustness performance of the different algorithms and first compare them on synthetic data sets. 
    We also compare them on the popular swimmer data set. 
		Comparison on large-scale real-world data sets would require dedicated implementations, such as the parallel first-order method proposed by \cite{BRRT12} for the LP~\eqref{rechtLP}, and is a topic for further research. 
  The code for all algorithms is available at \url{https://sites.google.com/site/nicolasgillis/code}.

  \subsection{Synthetic Data Sets}

With the algorithms above we have run a benchmark with certain synthetic
data sets particularly suited to investigate the robustness behaviour under
influence of noise.  In all experiments the problem dimensions are fixed to $m
= 50$, $n = 100$ and $r = 10$.  We conducted our experiments with six different
data models.  As we will describe next, the models differ in the way the factor
$H$ is constructed and the sparsity of the noise matrix~$N$.  Given a desired
noise level $\epsilon$, the noisy $r$-separable matrix $\tilde{M} = M + N = WH
+ N$ is generated as follows:

The entries of $W$ are drawn uniformly at random from the interval $[0,1]$
(using Matlab's \texttt{rand} function). Then each column of $W$ is 
normalized so that its entries sum to one. 

The first $r$ columns of $H$ are always taken as the identity matrix to satisfy
the separability assumption.  The remaining columns of $H$ and the noise matrix
$N$ are generated in two different ways (similar to~\cite{GV12}):

\begin{enumerate}

\item \emph{Dirichlet}. The remaining 90 columns of $H$ are generated according
    to a Dirichlet distribution whose $r$ parameters are chosen uniformly in
    $[0,1]$ (the Dirichlet distribution generates vectors on the boundary of
    the unit simplex so that $\norm{H(:,j)}_1 = 1$ for all~$j$).  Each entry of
    the noise matrix $N$ is generated following the normal distribution
    $\mathcal{N}(0,1)$ (using the \texttt{randn} function of Matlab). 

\item \emph{Middle Points}. The $\frac{r(r-1)}{2} = 45$ next columns of $H$
    resemble all possible equally weighted convex combinations of pairs from
    the $r$ leading columns of $H$.  This means that the corresponding 45
    columns of $M$ are the middle points of pairs of columns of $W$. The
    trailing 45 columns of $H$ are generated in the same way as above, using
    the Dirichlet distribution.  No noise is added to the first $r$ columns of
    $M$, that is, $N(:,1:r) = 0$, while all the other columns are moved toward
    the exterior of the convex hull of the columns of $W$ using 
    \begin{equation*}
        N(:,j) = M(:,j)-\bar{w},  \quad \text{ for } r+1 \leq j \leq n, 
    \end{equation*}
    where $\bar{w}$ is the average of the columns of $W$ (geometrically, this
    is the vertex centroid of the convex hull of the columns of $W$). 

\end{enumerate}

We combine these two choices for $H$ and $N$ with three options that control
the pattern density of~$N$, thus yielding a total of six different data
models:

\begin{enumerate}

\item \emph{Dense noise}. Leave the matrix $N$ untouched.

\item \emph{Sparse noise}.  Apply a mask to $N$ such that
    roughly 75\% of the entries are set to zero (using the \texttt{density}
    parameter of Matlab's \texttt{sprand} function).

\item \emph{Pointwise noise}. Keep only one randomly picked non-zero
    entry in each nonzero column of~$N$.

\end{enumerate}

Finally we scale the resulting matrix $N$ by a scalar such that $\norm{N}_1 =
\epsilon$.  In order to avoid a bias towards the natural ordering, the columns
of $\tilde{M}$ are permuted at random in a last step.

\subsubsection{Error Measures and Methodology}

Let $\mathcal{K}$ be the set of indices extracted by an algorithm.  In our
comparisons, we will use the following two error measures:

\begin{itemize}

\item \emph{Index recovery}: percentage of correctly extracted indices in
    $\mathcal{K}$ (recall that we \emph{know} the indices corresponding to the columns of
    $W$). 

\item \emph{$\ell_1$ residual norm}: We measure the relative $\ell_1$ residual by 
    \begin{equation}
        1 - \min_{H \ge 0}
        \frac{\norm{\tilde{M} - \tilde{M}(:, \mathcal{K}) H}_s}{\norm{\tilde{M}}_s}. \label{eqn:sumerr}
\end{equation}
\end{itemize}
Note that both measures are between zero and one, one being the best possible value, zero the worst.  

The aim of the experiments is to display the robustness of the algorithms from
Section~\ref{sec:numexp_algs} applied to the data sets described in the
previous section under increasing noise levels.  For each data model, we ran
all the algorithms on the same randomly generated data on a predefined range of
noise levels~$\epsilon$.  For each such noise level, 25 data sets were
generated and the two measures are averaged over this sample for each
algorithm.

\subsubsection{Results}

Figures~\ref{fig:dirichlet} and \ref{fig:middlepoints} display the results for the three experiments of ``Dirichlet'' and
``Middle Points'' types respectively.  For comparison purpose, we also display the value of
\eqref{eqn:sumerr} for the true column indices $\mathcal{K}$ of $W$ in $M$,
labeled ``true K'' in the plots. 
\begin{figure*}[hp]
\begin{center}
    \includegraphics[width=.49\textwidth]{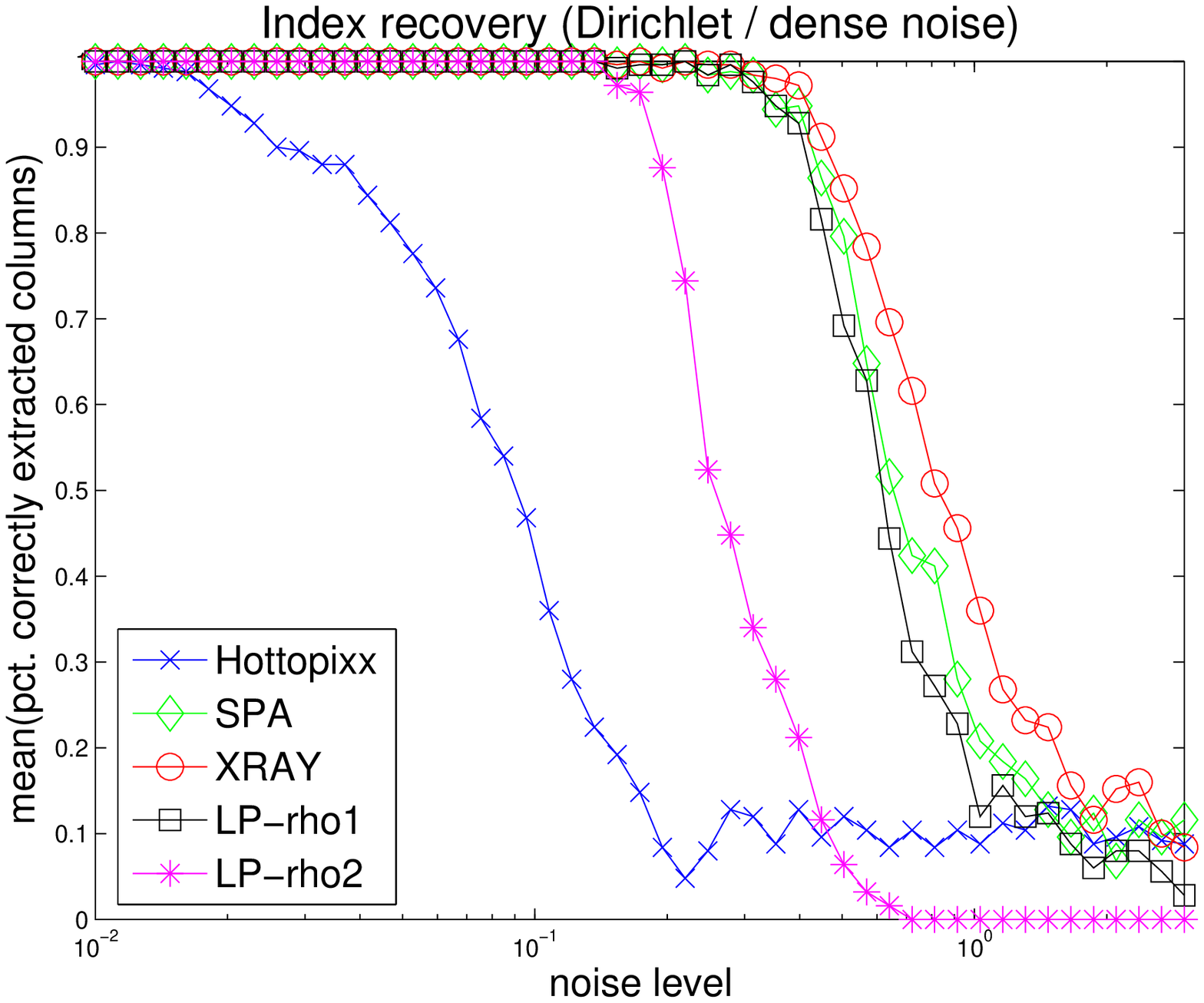}
    \hfill
    \includegraphics[width=.49\textwidth]{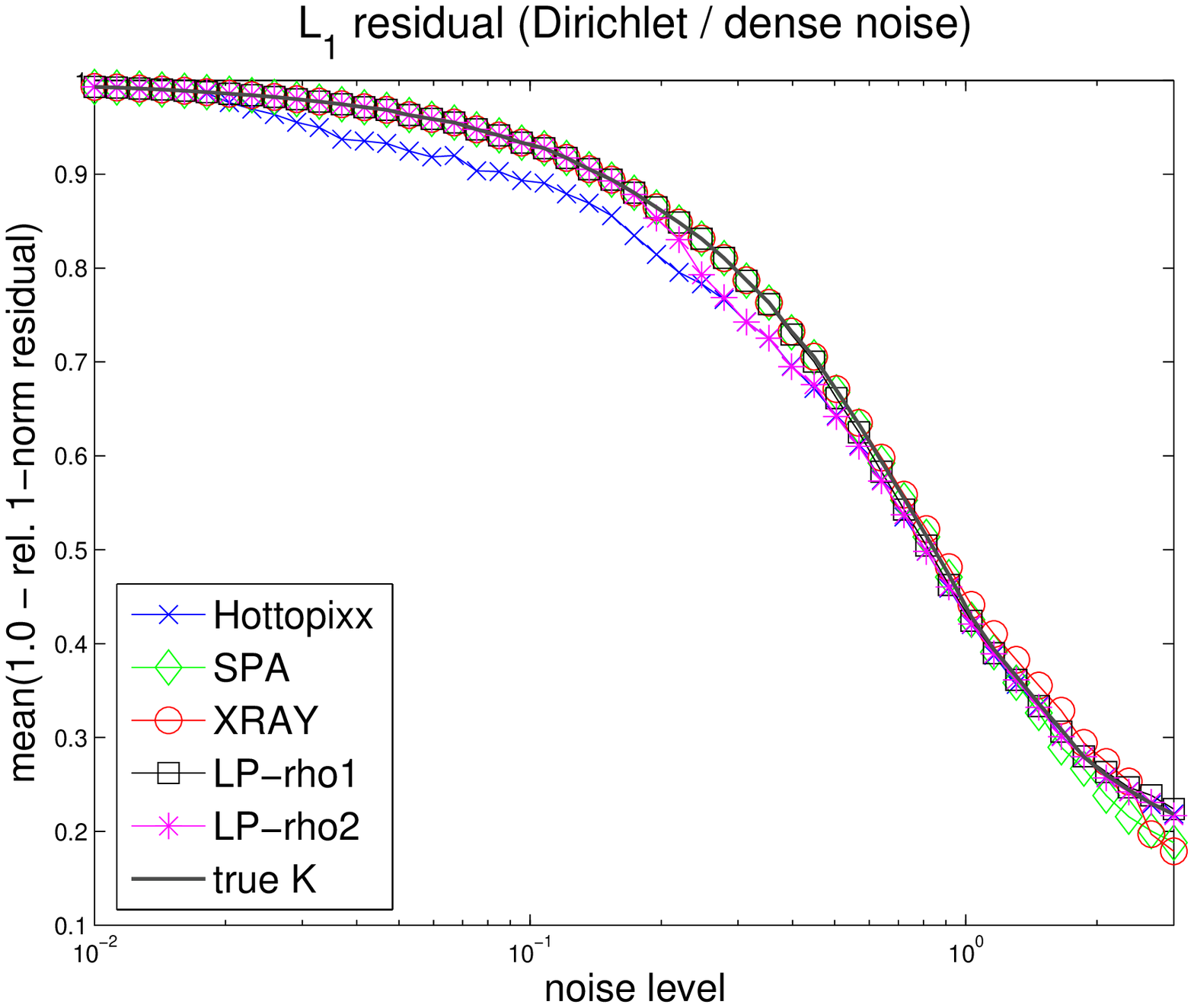}\\
    \vfill
    \includegraphics[width=.49\textwidth]{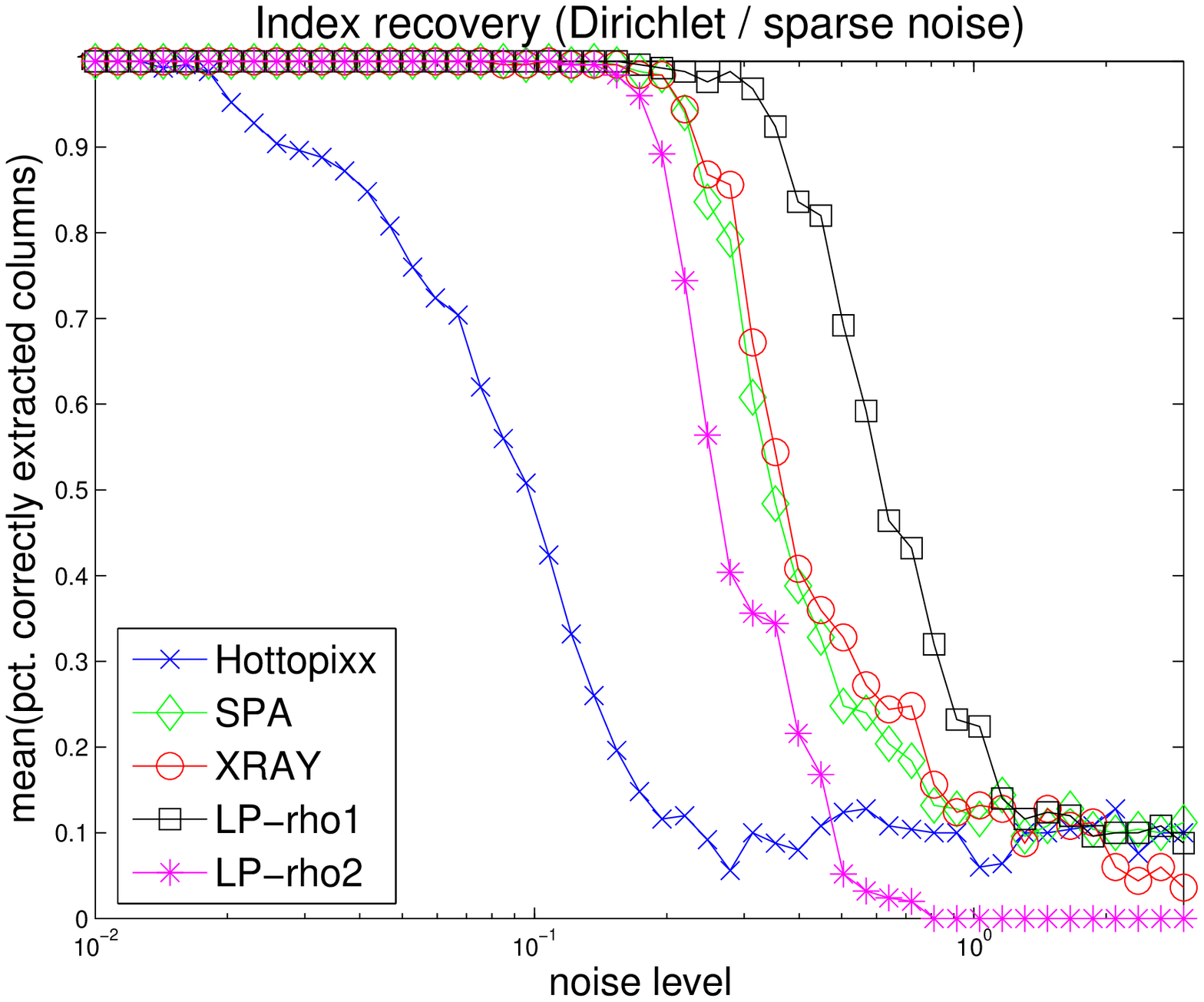}
    \hfill
    \includegraphics[width=.49\textwidth]{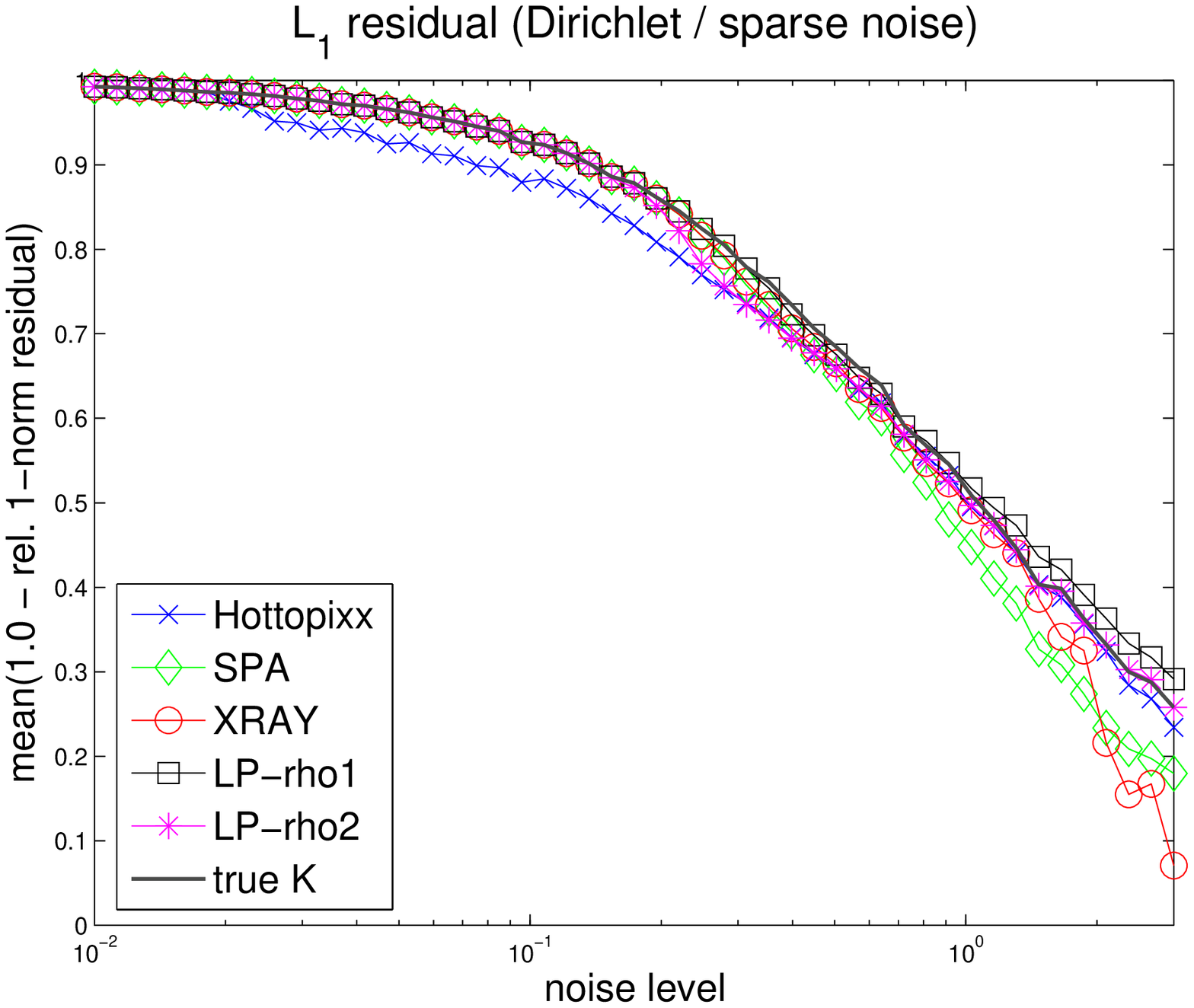}\\
    \vfill
    \includegraphics[width=.49\textwidth]{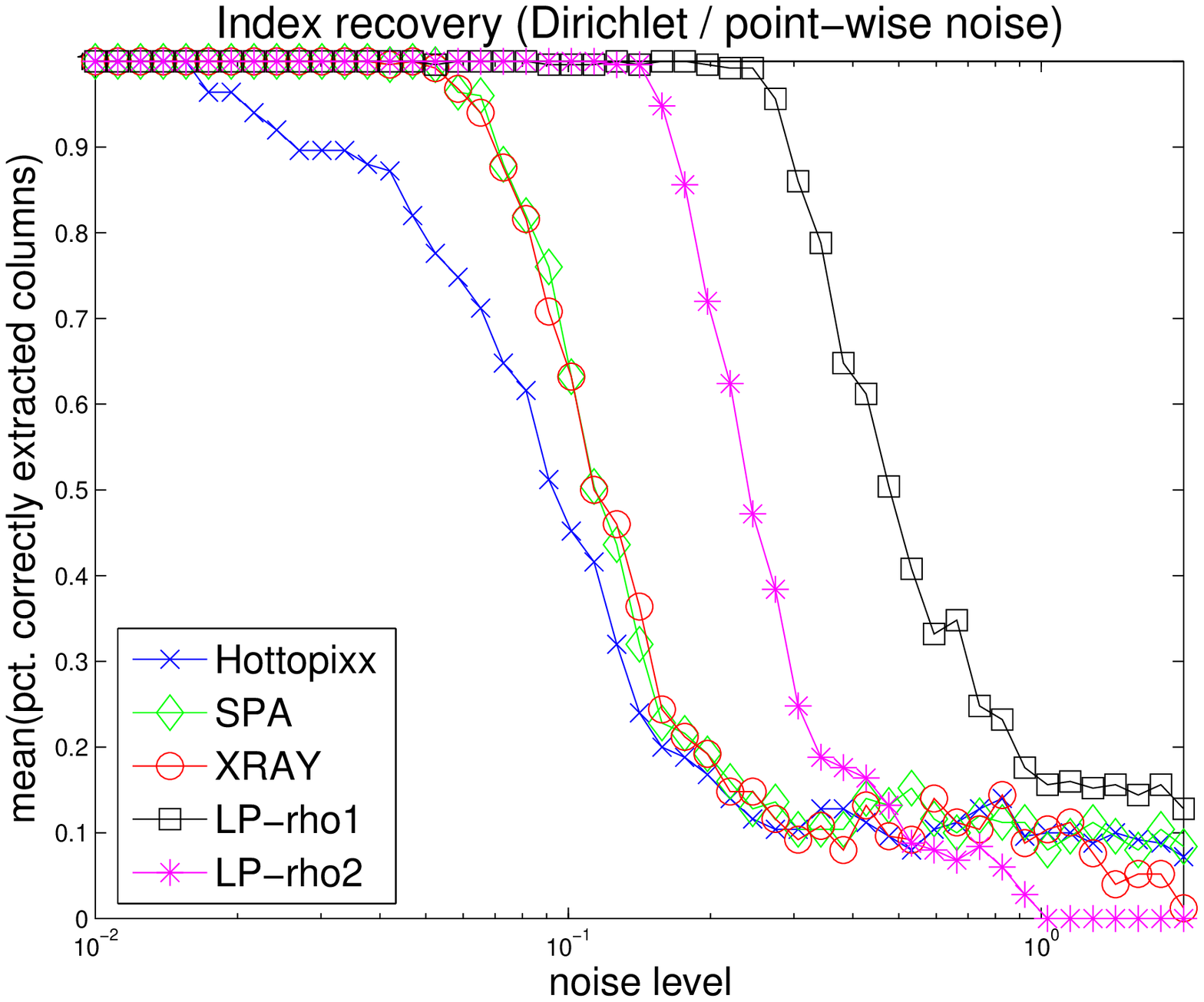}
    \hfill
    \includegraphics[width=.49\textwidth]{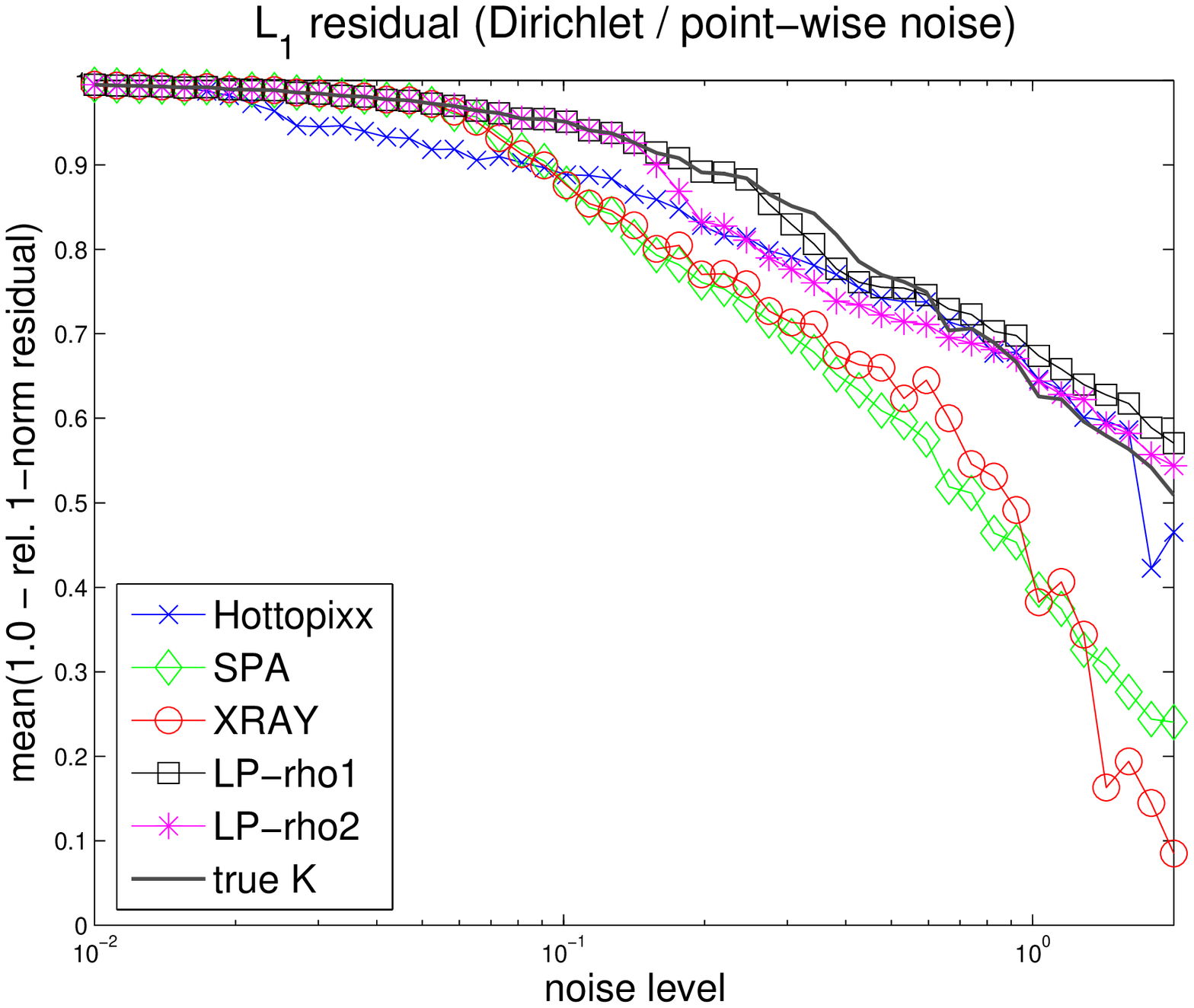}

    \caption{Comparison of near-separable NMF algorithms on ``Dirichlet'' type data
    sets.  From left to right: index recovery and $\ell_1$ residual. From top
    to bottom: dense noise, sparse noise and pointwise noise. }

    \label{fig:dirichlet}
\end{center}
\end{figure*} 
\begin{figure*}[hp]
\begin{center}
    \includegraphics[width=.49\textwidth]{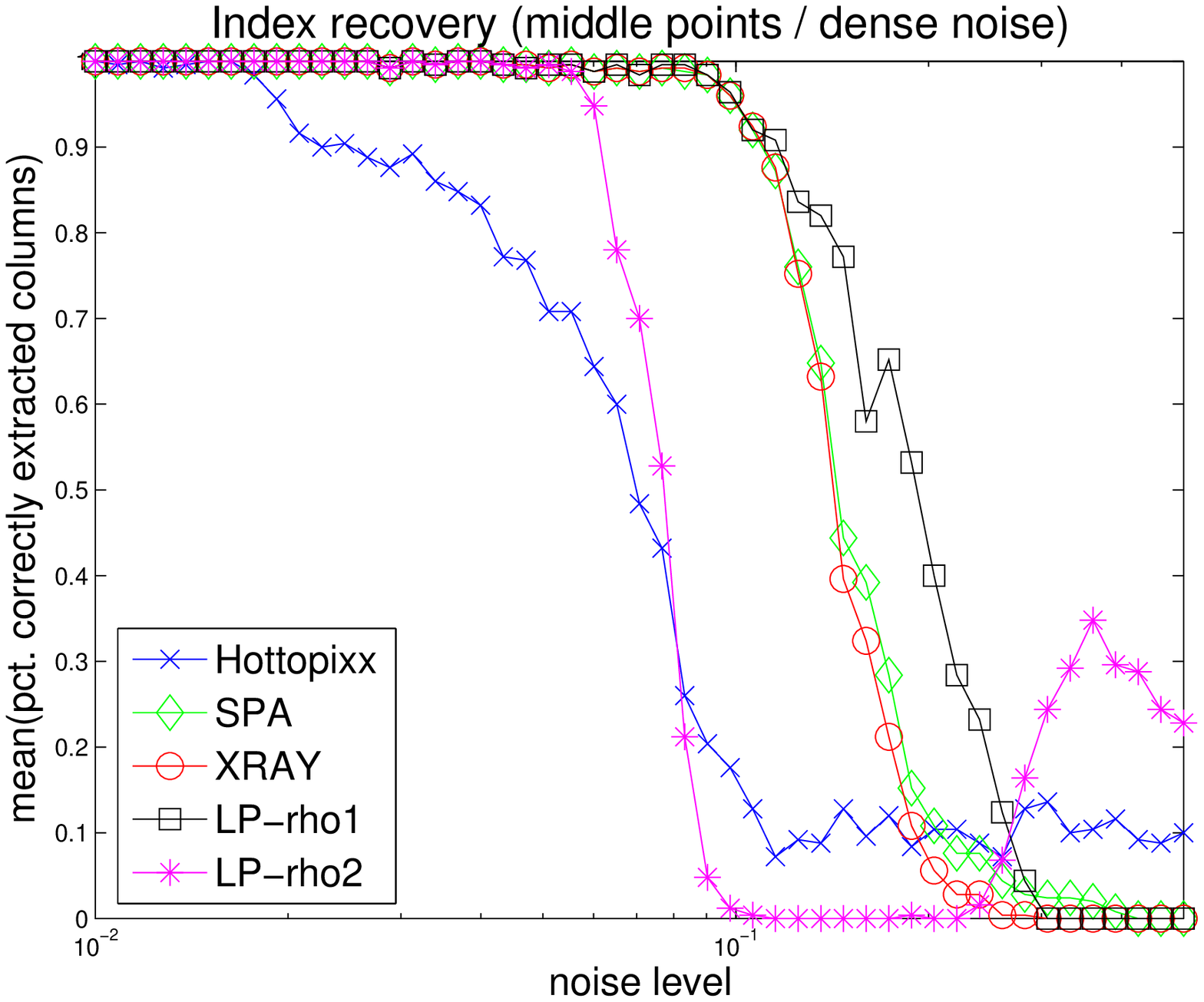}
    \hfill
    \includegraphics[width=.49\textwidth]{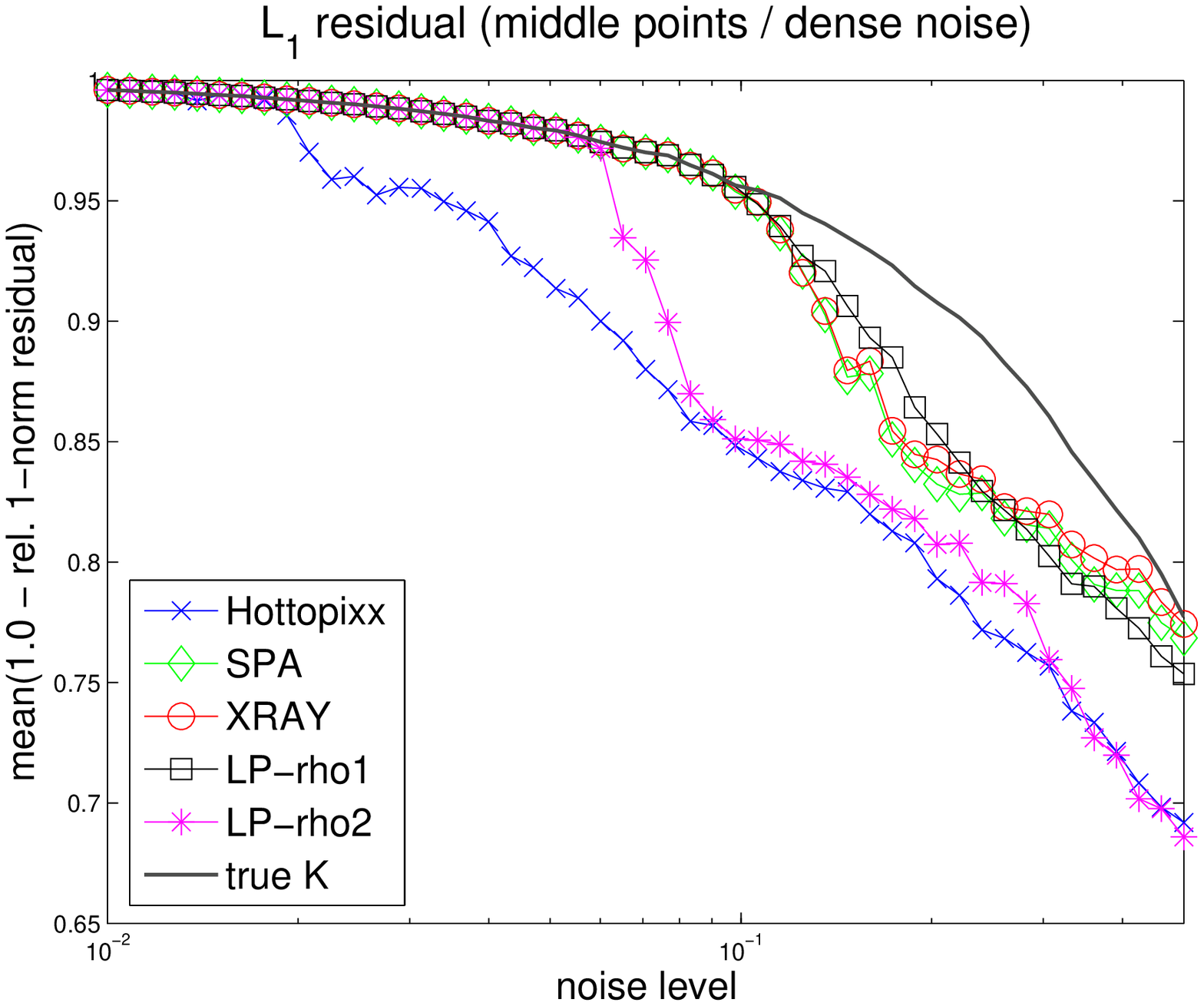}\\
    \vfill
    \includegraphics[width=.49\textwidth]{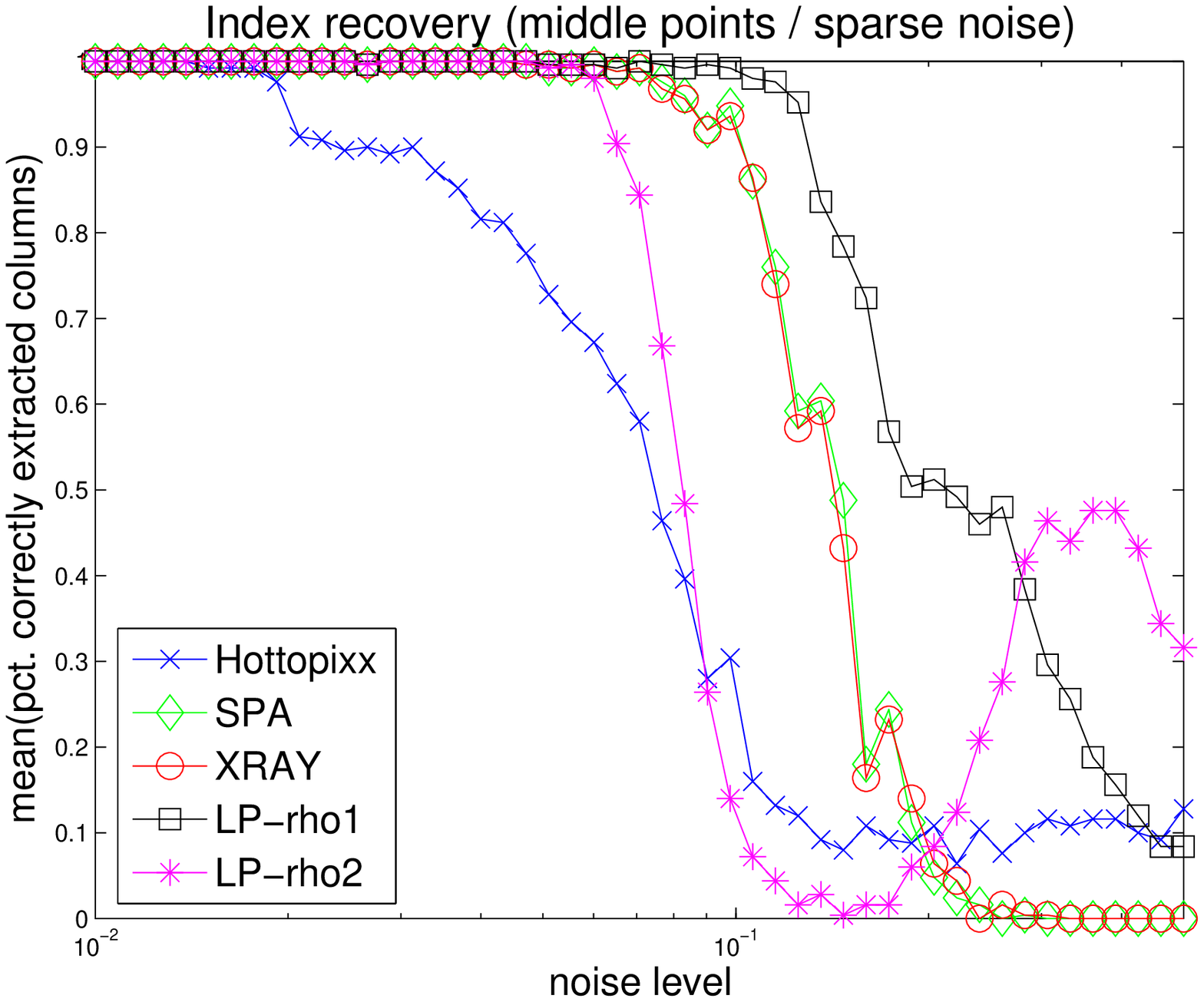}
    \hfill
    \includegraphics[width=.49\textwidth]{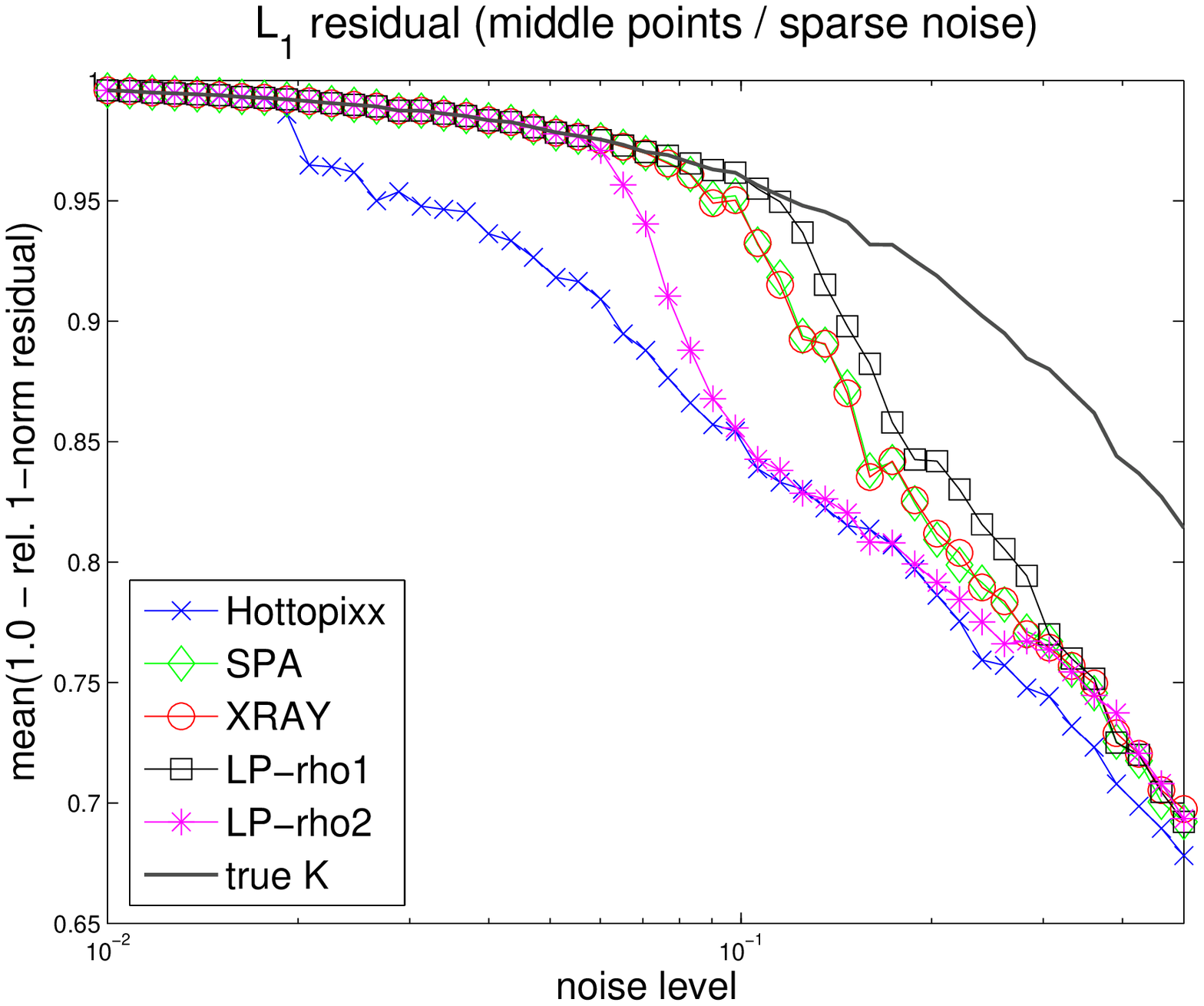}\\
    \vfill
    \includegraphics[width=.49\textwidth]{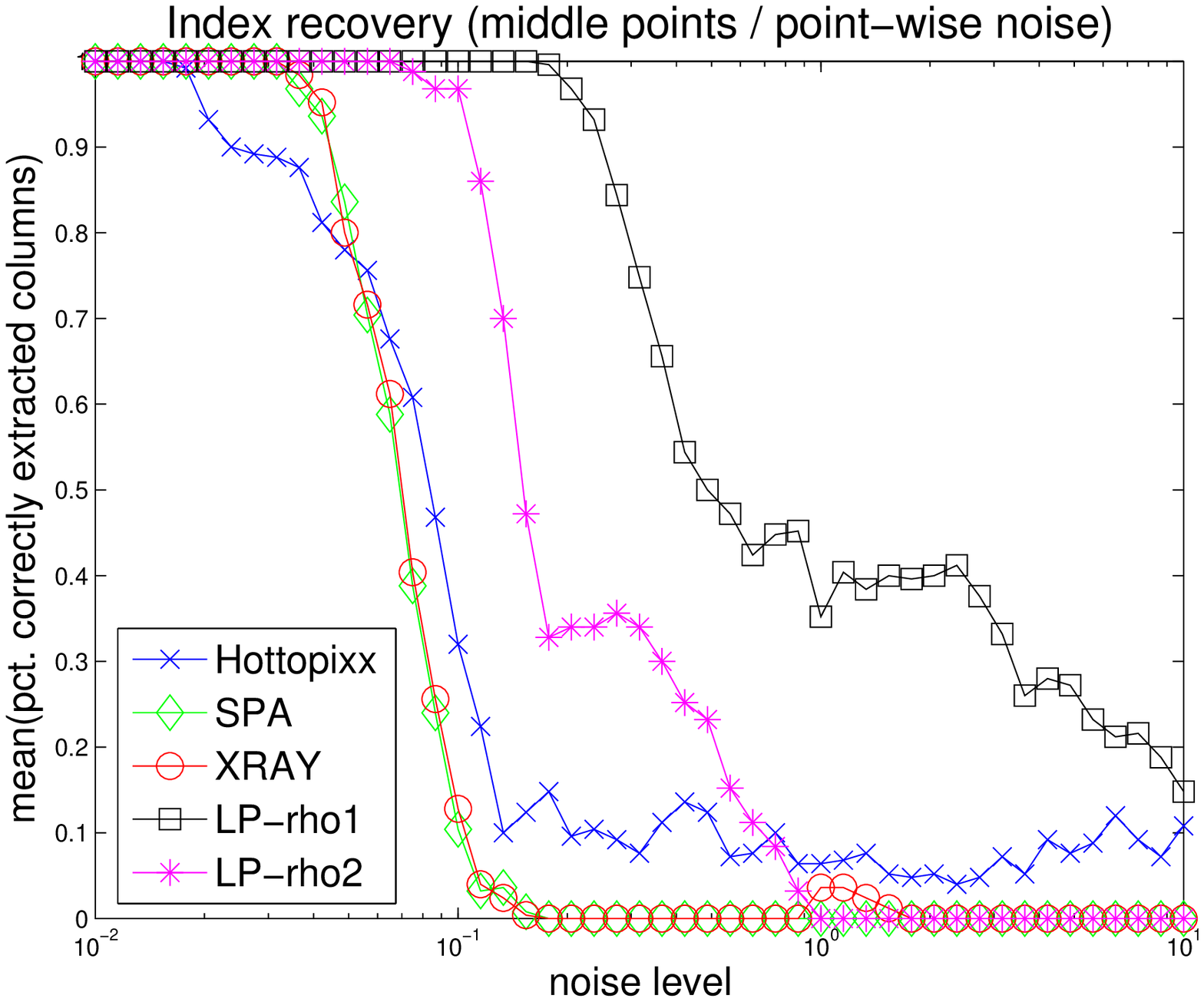}
    \hfill
    \includegraphics[width=.49\textwidth]{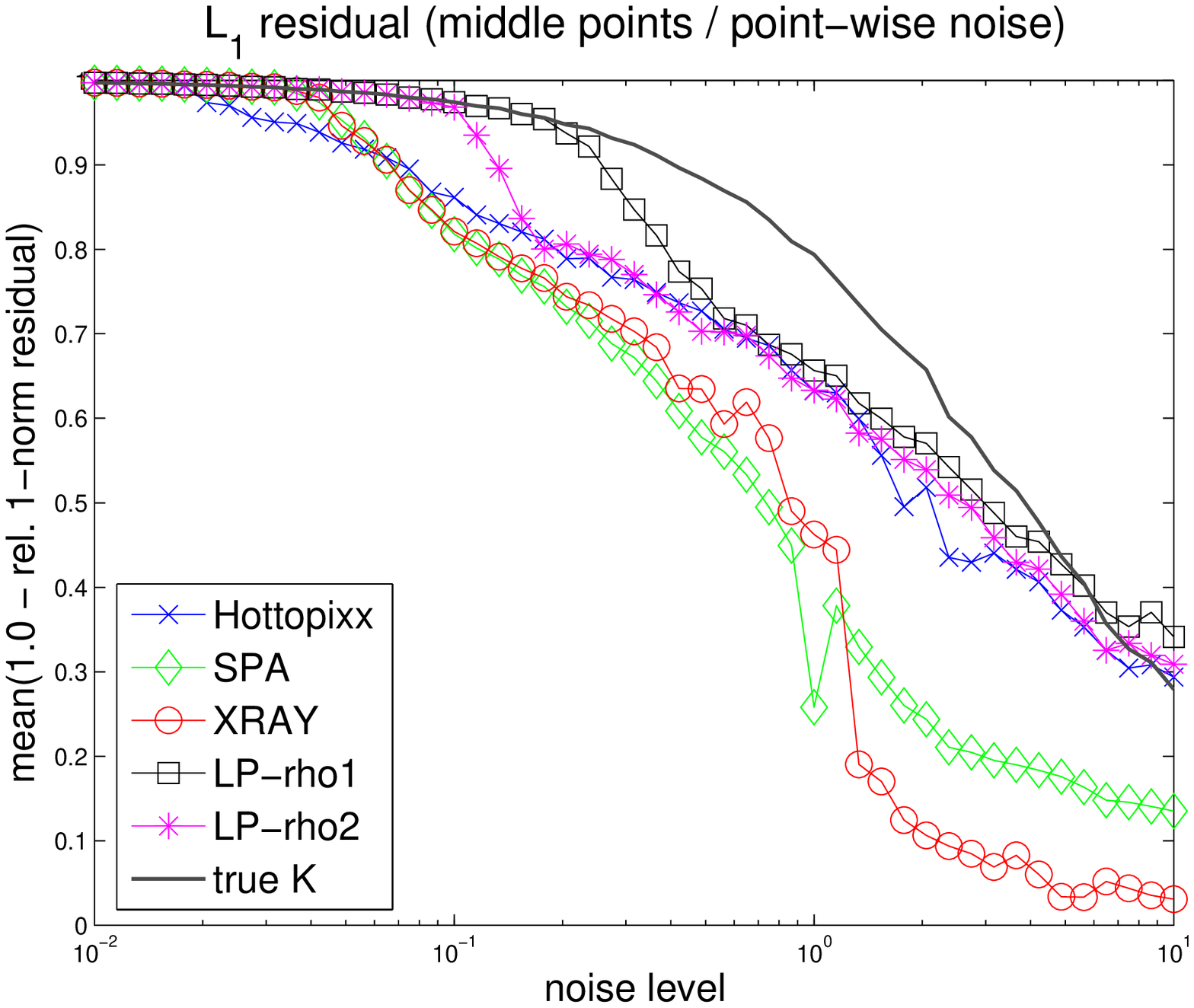}

    \caption{Comparison of near-separable NMF algorithms on ``Middle Points'' type
    data sets.  From left to right: index recovery and $\ell_1$ residual. From
    top to bottom: dense noise, sparse noise and pointwise noise. }

    \label{fig:middlepoints}
\end{center}
\end{figure*}  
    In all experiments, we observe that     
   \begin{itemize}

   \item The new LP model~\eqref{rechtgen} is significantly more robust to noise than Hottopixx, 
   which confirms our theoretical results; see Section~\ref{robdup}. 
   
   \item  The variant of LP~\eqref{rechtgen}  with $\rho = 2$ is less robust than with $\rho = 1$, 
   as suggested by our theoretical findings from Section~\ref{robdup}. 
   
   \item SPA and XRAY perform, in average, very similarly. 
   
   \end{itemize}
   Comparing the three best algorithms (that is, SPA, XRAY and LP~\eqref{rechtgen} with $\rho = 1$), we have that            
     \begin{itemize}
     
   \item  In case of ``dense'' noise, they give comparable results; although LP~\eqref{rechtgen} with $\rho = 1$ performs slightly worse for the ``Dirichlet'' type, and slightly better for the ``Middle Points'' type. 
   
   \item In case of ``sparse'' noise, LP~\eqref{rechtgen} with $\rho = 1$ performs consistently better then SPA and XRAY: for all noise levels, it identifies correctly more columns of $W$ and the corresponding NMF's have smaller $\ell_1$ residual norms.

   \item In case of ``pointwise'' noise, LP~\eqref{rechtgen} with $\rho = 1$ outperforms SPA and XRAY. In particular, for high noise level, it is able to extract correctly almost all columns of $W$ while SPA and XRAY can only extract a few for the ``Dirichlet'' type (performing as a guessing algorithm since they extract correctly only $r/n = 10\%$ of the columns of~$W$), or none for the ``Middle Points'' type. 
   
   Note that LP~\eqref{rechtgen} with $\rho = 2$ also performs consistently better then SPA and XRAY in case of ``pointwise'' noise.

   \end{itemize}

   \begin{remark}
   For the ``Middle Points'' experiments and for large noise levels, the middle points of the columns of $W$ become the vertices of the convex hull of the columns of $\tilde{M}$ (since they are perturbed toward the outside of the convex hull of the columns of $W$). 
   Hence, near-separable NMF algorithms should not extract any original column of $W$. 
   However, the index measure for LP~\eqref{rechtgen} with $\rho = 2$ increases for larger noise level (although the $\ell_1$ residual measure decreases); see Figure~\ref{fig:middlepoints}. 
   It is difficult to explain this behavior because the noise level is very high (close to 100\%)  hence 
   the separability assumption is far from being satisfied and it is not clear what the LP~\eqref{rechtgen} does.  
   \end{remark}

\begin{table}[t]
\begin{center}
\begin{tabular}{|c||c|c|c|c|c|c|}
\hline
           & D/dense  &   D/sparse  &  D/pw  & MP/dense &  MP/sparse & MP/pw \\ 
\hline \hline
Hottopixx  & 2.5      &   2.5       & 3.6    & 4.4      & 4.3        & 4.2\\
SPA        & $<$0.1     &   $<$0.1      & $<$0.1   & $<$0.1     & $<$0.1       & $<$0.1\\ 
XRAY       & $<$0.1     &   $<$0.1      & $<$0.1   & $<$0.1     & $<$0.1       & $<$0.1\\ 
LP \eqref{rechtgen}, $\rho = 1$ 
           & 20.5     &   34.1      & 39.0   & 52.5     & 88.1       & 41.4\\ 
LP \eqref{rechtgen}, $\rho = 2$ 
           & 10.5     &   12.3      & 16.0   & 32.5     & 56.9       & 27.4\\ 
\hline  
\end{tabular}
\caption{Average computational time in seconds for the different algorithms and data models. (D~stands for Dirichlet, MP for middle points, pw for pointwise.)} 
\label{ct}
\end{center}
\end{table}
Table~\ref{ct} gives the average computational time for a single application of
the algorithms to a data set.  As expected, the LP-based methods are
significantly slower than SPA and XRAY; designing faster solvers is definitely
an important topic for further research.  Note that the Hottopixx model can be
solved about ten times faster on average than the LP model~\eqref{rechtgen},
despite the only essential difference being the trace constraint $\tr(X) = r$.
It is difficult to explain this behaviour as the number of simplex iterations
or geometry of the central path cannot easily be set in relation to the
presence or absence of a particular constraint.

Table~\ref{tipping_point} displays the index recovery robustness:  For each
algorithm and data model, the maximum noise level $\norm{N}_1$ for which the
algorithm recovered on average at least 99\% of the indices corresponding to the columns of $W$.  In all
cases, the LP~\eqref{rechtgen} with $\rho=1$ is on par or better than all other
algorithms. 
\begin{table}[t]
\begin{center}
\begin{tabular}{|c||c|c|c|c|c|c|}
\hline
           & D/dense  &   D/sparse  &  D/pw  & MP/dense &  MP/sparse & MP/pw \\ 
\hline \hline
Hottopixx  & 0.014    &   0.018     & 0.016  & 0.016    & 0.018      & 0.015\\
SPA        & 0.220    &   0.154     & 0.052  & 0.077    & 0.071      & 0.032\\ 
XRAY       & \textbf{0.279}    &   0.154     & 0.052  & \textbf{0.083}    & 0.071      & 0.032\\ 
LP \eqref{rechtgen}, $\rho = 1$ 
           & \textbf{0.279}    &   \textbf{0.195}     & \textbf{0.197}  & \textbf{0.083}    & \textbf{0.098}      & \textbf{0.178}\\ 
LP \eqref{rechtgen}, $\rho = 2$ 
           & 0.137    &   0.121     & 0.141  & 0.055    & 0.055      & 0.075\\ 
\hline  
\end{tabular}
\caption{Index recovery robustnesss: Largest noise level $\norm{N}_1$ for which
an algorithm achieves almost perfect index recovery (that is, at least $99\%$
on average).} 
\label{tipping_point}
\end{center}
\end{table}

  \subsection{Swimmer Data Set}

	The swimmer data set is a widely used data set for benchmarking NMF algorithms \cite{DS03}. 
	It consists of 256 binary images of a body with four limbs which can be each in four different positions; see Figure~\ref{fig:swimmer}. 
		\begin{figure*}[hp]
\begin{center}
    \includegraphics[width=3cm]{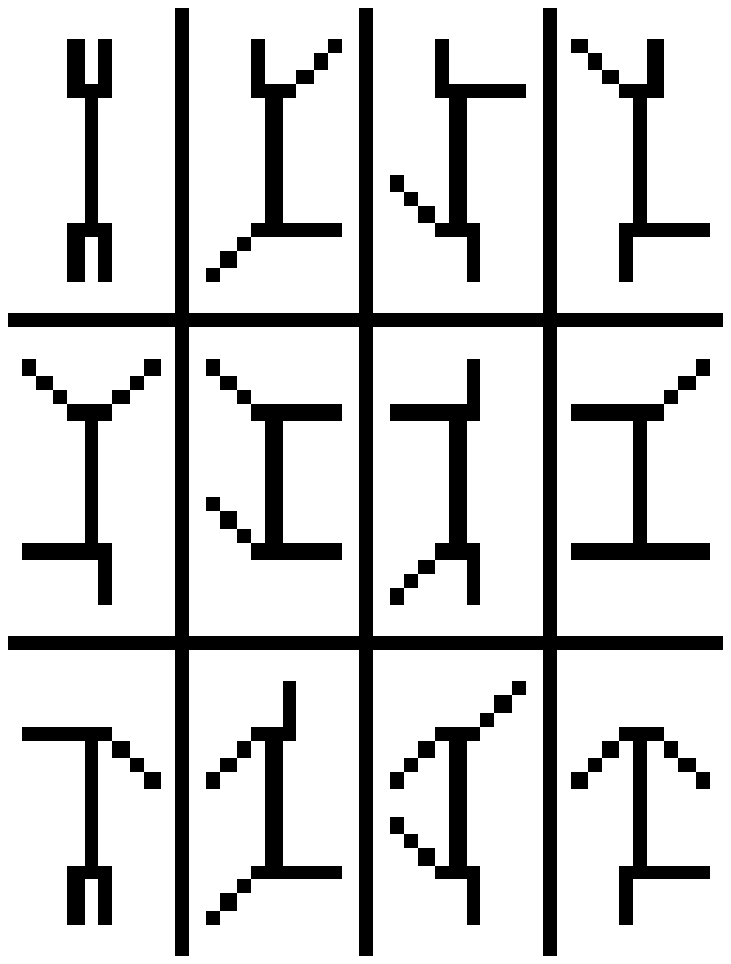}
    \caption{Sample images of the swimmer data set.}
    \label{fig:swimmer}
\end{center}
\end{figure*} 
	Let $M \in \{0,1\}^{256 \times 220}$ correspond to the swimmer data set where each row corresponds to an image, and each column to a pixel. It turns out that the matrix $M$ is 16-separable:  up to permutation, $M$ has the following form 
	\[
	M  \, =  \, W  \, \left[I_{16}, \, I_{16},  \, I_{16},  \, \frac{1}{4} E_{16 \times 14},   \, 0_{16 \times 158}\right] , 
	\]
	where $E_{m \times n}$ denotes the $m$-by-$n$ all-one matrix. 
	In fact, all the limbs are disjoint and contain three pixels (hence each column of $W$ is repeated three times), the body contains fourteen pixels and the remaining 158 background pixels do not contain any information. 
	
	\begin{remark}[Uniqueness of $H$]
	Note that the weights $\frac{1}{4} E_{16 \times 14}$ corresponding to the pixels belonging to the body are not unique. The reason is that the matrix $W$ is not full rank (in fact, $\rank(W) = 13$) implying that the convex hull of the columns of $W$ and the origin is not a simplex (that is, $r+1$ vertices in dimension $r$). 
	Therefore, the convex combination needed to reconstruct a point in the interior of that convex hull is not unique (such as a pixel belonging to the body in this example); see the discussion in \cite{G12}. 
	\end{remark}

	Let us compare the different algorithms on this data set:  
	\begin{itemize}
	
	\item \textbf{SPA}. Because the rank of the input matrix $M$ is equal to thirteen, the residual matrix becomes equal to zero after thirteen steps and SPA cannot extract more than thirteen indices hence it fails to decompose $M$.  
	
	\item \textbf{XRAY}. At the first step, the criterion used by XRAY to identify an extreme ray of the convex hull of the columns of $M$ is maximized by all non-zero columns of $M$ hence any of them can be extracted. 
	Since there are 48 pixels belonging to a limb and only 14 to the body, XRAY is more likely to extract a pixel on a limb (after which it is able to correctly decompose $M$).  However, if the first pixel extracted by XRAY is a pixel of the body then XRAY requires to be run with $r=17$ to achieve a perfect decomposition. 
	Therefore, XRAY succeeds on this example only with probability $\frac{48}{62}$ $\sim 77\%$ (given that XRAY picks a column at random among the one maximizing the criterion). We consider here a run where XRAY failed, otherwise it gives the same perfect decomposition as the new LP based approaches; see below.

	\item \textbf{Hottopixx}. 
	With $\epsilon = 0$ in the Hottopixx LP model~\eqref{rechtLP},  the columns of $W$ are correctly identified and Hottopixx performs perfectly. However, as soon as $\epsilon$ exceeds approximately 0.03, Hottopixx fails in most cases. 
	In particular, if $p$ is chosen such that its smallest entry does not correspond to a columns of $W$, then it always fails (see also the discussion in Example~\ref{ex1}). Even if $p$ is not chosen by an adversary but is randomly generated, this happens with high probability since most columns of $M$ do not correspond to a column of $W$.

	\item \textbf{LP \eqref{gillisLP1} with $\rho = 1$}. For $\epsilon$ up to approximately 0.97, the LP model~\eqref{gillisLP1} (that is, the new LP model based on relative error) idenfities correctly the columns of $W$ and decomposes $M$ perfectly. 
	
	\item \textbf{LP \eqref{rechtgen} with $\rho = 1$}. For $\epsilon$ up to approximately 60, (note that the $\ell_1$ norm of  the columns of $W$ is equal to 64), the LP model~\eqref{rechtgen} (that is, the new LP model based on absolute error) identifies correctly the columns of $W$ and decomposes $M$ perfectly. 
	
	\end{itemize}
	
	Figure~\ref{fig:swimmerweights} displays the optimal weights corresponding to the columns of $M$ extracted with the different algorithms (that is, the rows of the matrix $H^* = \argmin_{H \geq 0} ||M-M(:,\mathcal{K})H||_F$ where $\mathcal{K}$ is the index set extracted by a given algorithm): 
	the error for SPA is 20.8, for XRAY 12, for Hottopixx 12 and for the new LP models 0. 
		\begin{figure*}[hp]
\begin{center}
    \includegraphics[width=12cm]{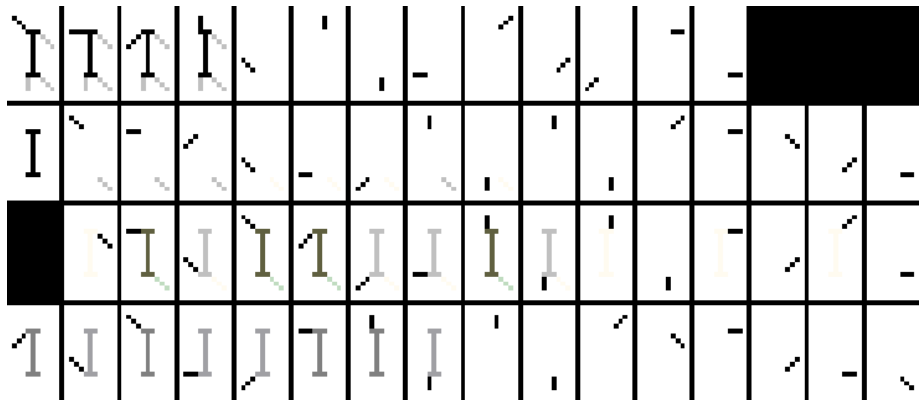}
    \caption{Weights corresponding to the extracted indices by the different algorithms. From to to bottom: SPA, XRAY, 
		Hottopixx ($\epsilon = 0.1$) and the new LP model~\eqref{gillisLP1} ($\epsilon = 0.1$).}
    \label{fig:swimmerweights}
\end{center}
\end{figure*} 
	Note that we used $\epsilon = 0.1$ for Hottopixx and the new LP models (a situation in which Hottopixx fails in most cases; see the discussion above --for the particular run shown in Figure~\ref{fig:swimmerweights}, Hottopixx extracts a background pixel corresponding to a zero column of $M$). 
	Note also that we do not display the result for the LP~\eqref{rechtgen} 
	because it gave an optimal solution similar to that of the LP~\eqref{gillisLP1}. 
	Finally, it is interesting to point out that the nonnegative rank of $M$ is equal to 16 hence the new LP models actually detect the nonnegative rank of $M$.

  \section{Conclusion and Further Work}  
\label{future}
  
  In this paper, we have proposed a new more practical and more robust LP model for near-separable NMF  which competes favorably with two state-of-the-art methods (outperforming them in some cases). 
It would be particularly interesting to investigate the following directions of research: 
  \begin{itemize} 
  
  \item Implementation and evaluation of an algorithm to solve \eqref{rechtgen} for large-sale real-world  problems. 
  
  \item Improvement of the theoretical bound on the noise level for Algorithm~\ref{postpro} to extract the right set of columns of the input data matrix in case duplicates and near duplicates are present in the data set (cf.\@ Section~\ref{rob}). 
  
  \item Design of practical and robust near-separable NMF algorithms. For example, would it be possible to design an algorithm as robust as our LP-based approach but computationally more effective (e.g., running in $\mathcal{O}(mnr)$ operations)?  
  
  \end{itemize}

	 \section*{Acknowledgments}
 
The authors would like to thank the reviewers for their feedback which helped improve the paper. \vspace{0.2cm}

  \appendix

  \section{Proof of Theorem~\ref{th1}} \label{appA}

  The next two lemmas are simple generalizations of Lemmas 2 \& 3 in \cite{G12b}. 
	Given any feasible solution $X$ of the the linear program \eqref{gillisLP}, the first one shows that 
	the $\ell_1$ norm of the error $M-MX$ with respect to the original noiseless data matrix is proportional to~$\epsilon$, that is, $\norm{\tilde{M} - \tilde{M} X}_1 \leq \mathcal{O}(\epsilon)$. 
The second one proves that the diagonal entries of $X$ corresponding to the columns of $W$ must be larger than $1-\mathcal{O}(\epsilon)$. 
	
\begin{lemma} \label{lem1}
Suppose $\tilde{M} = M+N$ where $\norm{M(:,j)}_1 = 1$ for all $j$ and $\norm{N}_{1} \leq \epsilon < 1$, and suppose $X$ is a feasible solution of \eqref{gillisLP}. Then, 
\[
\norm{X}_1 \leq  1 + \epsilon \left(\frac{\rho+2}{1-\epsilon} \right) 
\quad 
\text{ and } 
\quad 
\norm{M - M X}_1 \leq \epsilon \left(\frac{\rho+2}{1-\epsilon} \right). 
\]
\end{lemma}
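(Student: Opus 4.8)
The plan is to establish both bounds simultaneously by deriving two coupled inequalities relating $\norm{X}_1$ and $\norm{M - MX}_1$, and then solving the resulting two-variable linear system.

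First, the key observation I would exploit is that column-normalization of $M$ together with nonnegativity of both $M$ and $X$ forces the identity $\norm{MX}_1 = \norm{X}_1$. Indeed, fix a column index $j$; since $M(:,k) \geq 0$ and $X(k,j) \geq 0$, the vector $(MX)(:,j) = \sum_k M(:,k) X(k,j)$ is nonnegative, so its $\ell_1$ norm equals the sum of its entries. Exchanging the order of summation and using $\norm{M(:,k)}_1 = 1$ gives $\norm{(MX)(:,j)}_1 = \sum_k X(k,j) = \norm{X(:,j)}_1$, and taking the maximum over $j$ yields $\norm{MX}_1 = \norm{X}_1$. Combined with the triangle inequality and $\norm{M}_1 = 1$, this produces the first coupling inequality
\[
\norm{X}_1 = \norm{MX}_1 \leq \norm{M}_1 + \norm{M - MX}_1 = 1 + \norm{M - MX}_1 .
\]

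Second, I would transfer the feasibility constraint $\norm{\tilde{M} - \tilde{M}X}_1 \leq \rho\epsilon$ from the noisy matrix $\tilde{M}$ to the clean matrix $M$. Writing $\tilde{M} = M + N$ gives the algebraic identity $M - MX = (\tilde{M} - \tilde{M}X) - N(I - X)$. Applying the triangle inequality, submultiplicativity of the induced $1$-norm, the bound $\norm{N}_1 \leq \epsilon$, and $\norm{I - X}_1 \leq \norm{I}_1 + \norm{X}_1 = 1 + \norm{X}_1$ yields the second coupling inequality
\[
\norm{M - MX}_1 \leq \rho\epsilon + \epsilon\bigl(1 + \norm{X}_1\bigr) .
\]

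Finally, I would substitute the first inequality into the second to eliminate $\norm{X}_1$, obtaining $\norm{M - MX}_1 \leq \epsilon(\rho+2) + \epsilon\,\norm{M - MX}_1$; solving for $\norm{M - MX}_1$ (legitimate since $\epsilon < 1$) gives the claimed bound $\norm{M-MX}_1 \leq \epsilon(\rho+2)/(1-\epsilon)$, and substituting this back into the first inequality gives the bound on $\norm{X}_1$. I expect the main obstacle to be recognizing and justifying the identity $\norm{MX}_1 = \norm{X}_1$: this is exactly where the column-sum normalization of $M$ (rather than of $\tilde{M}$) enters, and the direction $\norm{X}_1 \leq \norm{MX}_1$ genuinely requires the nonnegativity of $M$. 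Once that identity is in hand, the remainder is a routine two-variable elimination.
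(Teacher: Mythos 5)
Your proof is correct and takes essentially the same route as the paper's: both arguments hinge on the identity $\norm{MX}_1 = \norm{X}_1$ (which you justify more explicitly, correctly noting that the direction $\norm{X}_1 \leq \norm{MX}_1$ needs nonnegativity and column normalization of $M$) and on transferring the feasibility constraint $\norm{\tilde{M}-\tilde{M}X}_1 \leq \rho\epsilon$ to the clean matrix via the triangle inequality and $\norm{N}_1 \leq \epsilon$. The only difference is organizational: the paper first solves a self-referential inequality for $\norm{X}_1$ and then deduces the bound on $\norm{M-MX}_1$, whereas you eliminate in the opposite order; both yield the identical bounds.
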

\begin{proof}
First note that $\norm{\tilde{M}}_1 \le \norm{M}_1 +
\norm{N}_1 \le 1 + \epsilon$ and $\norm{MX}_1 = \norm{X}_1$. By the feasibility
of $X$ for \eqref{gillisLP}, 
\begin{equation*}
    \rho \epsilon 
    \ge \norm{\tilde{M} - \tilde{M} X}_1
    \ge \norm{\tilde{M} X}_1 - \norm{\tilde{M}}_1
    \ge \norm{MX}_1 - \norm{NX}_1 - (1+\epsilon)
    \ge \norm{X}_1 - \epsilon \norm{X}_1 - 1 - \epsilon,
\end{equation*}
hence $\norm{X}_1 \le 1 + \epsilon \left(\frac{\rho + 2}{1 - \epsilon}\right)$, implying
that $\norm{NX}_1 \le \norm{N}_1 \norm{X}_1 \le \epsilon \left(1 + \frac{(\rho
+2)\epsilon}{1 - \epsilon}\right)$.  Therefore
\begin{equation*}
    \rho \epsilon
    \ge \norm{\tilde{M} - \tilde{M} X}_1
    = \norm{M + N - (M+N)X}_1
    \ge \norm{M - MX}_1 - \epsilon - \epsilon
        \left(1 + \frac{(\rho +2)\epsilon}{1 - \epsilon}\right),
\end{equation*}
from which we obtain
$\norm{M - MX}_1
\le \epsilon \left( \rho + 2 + \frac{(\rho+2)\epsilon}{1 - \epsilon}\right)
= \epsilon \left( \frac{\rho + 2}{1 - \epsilon} \right)$
\end{proof}

\begin{lemma} \label{lem2}
Let $\tilde{M} = M+N$ where $||M(:,j)||_1 = 1$ for all $j$, admits a rank-$r$ separable factorization $WH$ with $W$ $\kappa$-robustly conical and $||N||_{1} \leq \epsilon < 1$, and has the form \eqref{sepM} with $\max_{i,j} H'_{ij} \leq \beta < 1$ and $W, H \geq 0$. Let also $X$ be any feasible solution of \eqref{gillisLP}, then 
\[
X(j,j) \geq  1 - \frac{2 \epsilon}{\kappa (1-\beta)} \left( \frac{\rho+2}{1-\epsilon} \right) 
\] 
for all $j$ such that $M(:,j) = W(:,k)$ for some $1 \leq k \leq r$. 
\end{lemma}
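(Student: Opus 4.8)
The plan is to localize the feasibility constraint $\norm{\tilde{M} - \tilde{M}X}_1 \le \rho\epsilon$ to the single column $j$ with $M(:,j) = W(:,k)$, and then to play a lower bound against an upper bound for the quantity $(HX)(k,j)$. First I would invoke Lemma~\ref{lem1} to pass from the perturbed residual to the noiseless one, obtaining $\norm{M - MX}_1 \le C$ with $C := \epsilon\left(\frac{\rho+2}{1-\epsilon}\right)$, together with the column-sum estimate $\norm{X}_1 \le 1 + C$. Since $\norm{\cdot}_1$ is the maximum column $\ell_1$-norm, the $j$-th column of the residual inequality reads $\norm{W(:,k) - (MX)(:,j)}_1 \le C$. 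Writing $M = WH$ and $g := (HX)(:,j) = \sum_i H(:,i)X(i,j) \ge 0$, this becomes $\norm{W(:,k) - Wg}_1 \le C$, so the whole problem reduces to understanding the coefficient vector $g$.

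The lower bound $g(k) \ge 1 - C/\kappa$ comes from the $\kappa$-robust conicity of $W$, using the same factoring identity as in the computation preceding Theorem~\ref{th1}. I would split $Wg = g(k)W(:,k) + W(:,\mathcal{K})g(\mathcal{K})$ and write $W(:,k) - Wg = (1-g(k))W(:,k) - W(:,\mathcal{K})g(\mathcal{K})$. If $g(k) \ge 1$ the claimed inequality is trivial; otherwise I factor out the positive scalar $1-g(k)$, note that $g(\mathcal{K})/(1-g(k)) \ge 0$, and apply the definition of $\kappa$ to get $\norm{W(:,k) - Wg}_1 \ge (1-g(k))\kappa$, whence $g(k) \ge 1 - C/\kappa$.

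The upper bound on $g(k)$ is where the factor $(1-\beta)$ enters, and I expect the control of the off-diagonal mass to be the main obstacle. Here I would exploit the separable structure of $H$: the column of $H$ indexed by $j$ is a column of the identity block, so $H(k,j) = 1$; every other pure column contributes $0$ to row $k$, while every mixture column contributes at most $\beta$ to row $k$ (as $\max_{i,j} H'_{ij} \le \beta$). Hence $H(k,i) \le \beta$ for all $i \ne j$, which gives $g(k) = X(j,j) + \sum_{i \ne j} H(k,i)X(i,j) \le X(j,j) + \beta\sum_{i\ne j}X(i,j)$. The dangerous term is $\sum_{i\ne j}X(i,j)$, which a priori could be as large as $\mathcal{O}(n)$ and would ruin the estimate; the resolution is precisely the column-sum bound from Lemma~\ref{lem1}, since $\sum_{i\ne j}X(i,j) = \norm{X(:,j)}_1 - X(j,j) \le \norm{X}_1 - X(j,j) \le (1+C) - X(j,j)$.

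Combining the two bounds yields $1 - C/\kappa \le g(k) \le (1-\beta)X(j,j) + \beta(1+C)$, which rearranges to $X(j,j) \ge 1 - \frac{C(\kappa^{-1} + \beta)}{1-\beta}$. Finally, since $\kappa \le 1$ forces $\beta \le 1 \le \kappa^{-1}$ and hence $\kappa^{-1} + \beta \le 2\kappa^{-1}$, I would weaken this to $X(j,j) \ge 1 - \frac{2C}{\kappa(1-\beta)}$, which is exactly the claimed bound after substituting $C = \epsilon\left(\frac{\rho+2}{1-\epsilon}\right)$.
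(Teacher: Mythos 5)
Your proof is correct and takes essentially the same route as the paper's: your coefficient $g(k)=(HX)(k,j)$ is exactly the paper's quantity $\eta$, and your two-sided estimate on it — the lower bound from $\kappa$-robust conicity after factoring out $1-g(k)$, and the upper bound from $H(k,i)\le\beta$ for $i\ne j$ combined with the column-sum bound $\norm{X(:,j)}_1\le 1+C$ from Lemma~\ref{lem1} — mirrors the paper's argument step for step. The final weakening $\kappa^{-1}+\beta\le 2\kappa^{-1}$ is likewise the paper's use of $\kappa,\beta\le 1$, so no further comment is needed.
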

\begin{proof} 
The idea of the proof is the following: by assumption, each column of $W$ is isolated from the convex hull of the other columns of $M$. Therefore, to being able to approximate it up to error $\mathcal{O}(\epsilon)$, its corresponding diagonal entry must be large enough. 

Let $\mathcal{K}$ be the set of $r$ indices such that $M(:,\mathcal{K}) = W$. Let also $1 \leq k \leq r$ and denote $j = \mathcal{K}(k)$ so that $M(:,j) = W(:,k)$. 
By Lemma~\ref{lem1}, 
\begin{equation} \label{ubo}
||W(:,k) - WH X(:,j)||_1 \leq  \epsilon \left(\frac{\rho + 2}{1-\epsilon} \right). 
\end{equation} 
Since $H(k,j) = 1$, 
\begin{align*}
WH X(:,j) & = W(:,k) H(k,:) X(:,j)  + W(:,\mathcal{R}) H(\mathcal{R},:)X(:,j) \\ 
& = W(:,k) \Big( X(j,j) +  H(k,\mathcal{J})X(\mathcal{J},j) \Big)  + W(:,\mathcal{R}) y, 
\end{align*} 
where $\mathcal{R} = \{1,2,\dots,r\} \backslash \{k\}$, $\mathcal{J}= \{1,2,\dots,n\} \backslash \{j\}$ and $y = H(\mathcal{R},:)X(:,j) \geq 0$. We have  
\begin{equation} \label{etaa}
\eta = X(j,j) +  H(k,\mathcal{J}) X(\mathcal{J},j) \leq X(j,j) + \beta \left(1+ \frac{(\rho+2) \epsilon}{1-\epsilon} -X(j,j)\right), 
\end{equation}
since $||H(k,\mathcal{J})||_{\infty} \leq \beta$ and $||X(:,j)||_1 \leq 1+ \frac{(\rho+2) \epsilon}{1-\epsilon}$ (Lemma~\ref{lem1}). Hence 
\begin{equation} \label{eta}
||W(:,k) - WH X(:,j)||_1 \geq (1-\eta) \left\|W(:,k) - W(:,\mathcal{R}) \frac{y}{1-\eta}\right\|_1 \geq (1-\eta) \kappa. 
\end{equation}
Combining Equations~\eqref{ubo}, \eqref{etaa} and \eqref{eta},  we obtain 
\[
 1 - \left( X(j,j) + \beta \left( 1 + \frac{(\rho+2) \epsilon}{1-\epsilon} -X(j,j)\right) \right) \leq 
 \frac{\epsilon}{\kappa} \left(\frac{\rho + 2}{1-\epsilon} \right)
\]
which gives, using the fact that $\kappa, \beta \leq 1$, 
\begin{align*}
X(j,j) & \geq  1 - \frac{2 \epsilon}{\kappa (1-\beta)} \left( \frac{\rho+2}{1-\epsilon} \right) . 
\end{align*}
\end{proof}

If the diagonal entries corresponding to the columns of $W$ of a feasible solution $X$ of \eqref{gillisLP} are large, then the other diagonal entries will be small. In fact, the columns of $M$ are contained in the convex hull of the columns of $W$ hence can be well approximated with convex combinations of these columns.

\begin{lemma} \label{lem3}
  Let $\tilde{M} = M+N$ where $||M(:,j)||_1 = 1$ for all $j$, admits a rank-$r$ separable factorization $WH$ and $||N||_{1} \leq \epsilon$, and has the form \eqref{sepM}. Let $\mathcal{K}$ be the index set with $r$ elements such that $M(:,\mathcal{K}) = W$. Let also $X^*$ be an optimal solution of \eqref{gillisLP} such that  
\begin{equation} \label{condX}
X^*(k,k) \geq \gamma \quad \text{ for all $k \in  \mathcal{K}$},  
\end{equation}
where $0 \leq \gamma \leq 1$. Then, 
\[
X^*(j,j) 
\leq 1 - \min\left( \gamma, \frac{\rho}{2} \right) \quad  \text{ for all $j \notin  \mathcal{K}$}.  
\]
  \end{lemma}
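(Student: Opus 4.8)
The plan is to argue by contradiction against the optimality of $X^*$. Suppose $X^*(j,j) > \tau$ for some $j \notin \mathcal{K}$, where $\tau := 1 - \min(\gamma, \rho/2)$. Since $p$ has positive entries, it suffices to exhibit a feasible $X'$ whose diagonal is nowhere larger than that of $X^*$ and strictly smaller in position $j$: then $p^T \diag(X') \leq p^T\diag(X^*) - p(j)(X^*(j,j) - \tau) < p^T \diag(X^*)$, contradicting optimality. The structural fact I would exploit is that $j \notin \mathcal{K}$ is an interior point: writing $h = H(:,j) \in \mathbb{R}^r_+$ we have $M(:,j) = Wh = M(:,\mathcal{K})h$ with $\sum_k h(k) = 1$ (a genuine convex combination, since every column of $M$ and of $W$ sums to one). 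Thus column $j$ can be reconstructed from the columns of $W$, whose diagonal ``capacities'' are already large by hypothesis.

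The core of the construction is to rebuild the $j$-th column of $X$ from scratch, mixing a $\tau$-fraction of self-reconstruction with a $(1-\tau)$-fraction of the convex combination $h$. Concretely I would set $X'(j,j) = \tau$, put weight $(1-\tau)h(k)$ on the entry $(\mathcal{K}(k), j)$ for $1 \leq k \leq r$, zero out the remaining entries of column $j$, and leave all other columns of $X^*$ untouched. The reconstruction error of column $j$ then collapses to a pure noise mismatch, $\tilde{M}(:,j) - \tilde{M}X'(:,j) = (1-\tau)\,\nu$ with $\nu = N(:,j) - \sum_k h(k) N(:,\mathcal{K}(k))$ (the noiseless parts cancel because $M(:,j) = Wh$), so that $\norm{\nu}_1 \leq 2\epsilon$. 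The residual constraint $\norm{\tilde{M}(:,j) - \tilde{M}X'(:,j)}_1 \leq 2\epsilon(1-\tau) \leq \rho\epsilon$ therefore holds precisely when $1-\tau \leq \rho/2$, which is the origin of the $\rho/2$ term. The capacity constraints on the $W$-columns read $(1-\tau)h(k) \leq X'(\mathcal{K}(k),\mathcal{K}(k)) = X^*(\mathcal{K}(k),\mathcal{K}(k))$, which, since $h(k) \leq 1$ and $X^*(\mathcal{K}(k),\mathcal{K}(k)) \geq \gamma$, hold as soon as $1-\tau \leq \gamma$; this is the origin of the $\gamma$ term. Hence the largest reduction simultaneously permitted by the budget and the capacities is $1-\tau = \min(\gamma,\rho/2)$, giving exactly $\tau = 1 - \min(\gamma,\rho/2)$, and since this modification lowers only the diagonal entry $(j,j)$, the objective strictly decreases.

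The delicate point, and what I expect to be the main obstacle, is restoring the constraint $X(i,l) \leq X(i,i)$ in row $j$. Lowering $X'(j,j)$ to $\tau$ caps the whole $j$-th row at $\tau$, so every column $l$ with $X^*(j,l) > \tau$ loses the excess weight, which must be re-routed through the same convex combination $h$; each such reroute perturbs the reconstruction of column $l$ by at most $2\epsilon\,(X^*(j,l)-\tau)$ and must be absorbed within the budget $\rho\epsilon$ while leaving the $W$-diagonals untouched. Here I would first invoke Lemma~\ref{lem1} together with the hypothesis $X^*(k,k) \geq \gamma$ to show that the columns of $W$ barely use column $j$, namely that the off-diagonal mass $\sum_{i \neq k} X^*(i,k)$ is only $\mathcal{O}(\epsilon)$, so that rerouting cannot inflate any diagonal $X^*(\mathcal{K}(k),\mathcal{K}(k))$; the contributions of the remaining interior columns are then controlled by the same noise-cancellation estimate $\norm{\nu}_1 \leq 2\epsilon$ used above. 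The bookkeeping for these columns is the genuinely technical part, whereas the residual feasibility checks (nonnegativity and $X'(i,i) \leq 1$) are immediate from the construction. This argument parallels Lemma~3 of \cite{G12b}, of which the present statement is the $\rho$-dependent generalization.
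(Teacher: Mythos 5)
Your construction is exactly the one the paper uses: replace column $j$ of $X^*$ by $X'(j,j)=1-\alpha$ and $X'(i,j)=\alpha H(i,j)$ for $i\in\mathcal{K}$ (all other entries of that column set to zero), with $\alpha=\min(\gamma,\rho/2)$; observe that the residual of column $j$ becomes $\alpha \norm{M(:,j)+N(:,j)-(W+N(:,\mathcal{K}))H(:,j)}_1\leq 2\alpha\epsilon\leq\rho\epsilon$ because the noiseless parts cancel, and that $\alpha H(i,j)\leq\gamma\leq X^*(i,i)$; then conclude by optimality since $p$ has positive entries. Up to that point your proposal and the paper's proof of Lemma~\ref{lem3} coincide essentially line for line, including the provenance of the two terms in $\min(\gamma,\rho/2)$.

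The divergence is at what you call the delicate point. The paper's proof contains no rerouting argument: it verifies the constraint $X(i,l)\leq X(i,i)$ only for the entries of the modified column $j$ and then asserts feasibility of the modified matrix. You are right that this leaves the row-$j$ constraints $X'(j,l)\leq X'(j,j)=1-\alpha$ for $l\neq j$ unexamined --- lowering the diagonal entry caps the entire $j$-th row, and nothing in the hypotheses forces $X^*(j,l)\leq 1-\alpha$ a priori (think of a column $l$ that is a near-duplicate of column $j$ and is reconstructed almost entirely from it) --- so the concern you raise is a genuine subtlety rather than a phantom obstacle; it is simply one the paper passes over silently. That said, your proposed repair is only a sketch, and the sketch does not yet close the gap: rerouting the excess $X^*(j,l)-(1-\alpha)$ through $H(:,j)$ perturbs column $l$'s residual by up to $2\epsilon$ times that excess, which must still fit under the budget $\rho\epsilon$ \emph{on top of} column $l$'s existing residual, and it adds weight $\bigl(X^*(j,l)-(1-\alpha)\bigr)H(i,j)$ to entries $X'(i,l)$ for $i\in\mathcal{K}$ whose capacities $X^*(i,i)$ may already be saturated by $X^*(i,l)$; the appeal to Lemma~\ref{lem1} controls total column mass but not these row-wise capacity constraints. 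So, as submitted, your proof is incomplete precisely at the step you identify as the crux, whereas the paper's proof reaches the conclusion only by not raising that step at all; to make the argument self-contained you would need to either complete this bookkeeping or restructure the contradiction so that row $j$ is modified jointly with column $j$.
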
 
  \begin{proof} 
Let  $X$ be any feasible solution of \eqref{gillisLP} satisfying \eqref{condX}, and  $\alpha = \min\left( \gamma, \frac{\rho}{2} \right)$. 
Let us show that the $j$th column of $X$ for some $j \notin  \mathcal{K}$ can be modified as follows
\[
X(i,j) \leftarrow \left\{ \begin{array}{cc} 
1- \alpha & \text{ if } i = j,  \\
\alpha H(i,j) & \text{ if } i \in \mathcal{K},  \\ 
0 & \hspace{0.6cm} \text{ otherwise, } 
\end{array} \right.  
\] 
while keeping feasibility. 
First, $\alpha H(i,j) \leq \gamma \leq X(i,i)$ for all $i \in \mathcal{K}$ hence the condition $X(i,j) \leq X(i,i)$ for all $i,j$ is satisfied while, clearly, $0 \leq  X(i,i) \leq 1$ for all $i$. It remains to show that \mbox{$||\tilde{M}(:,j) - \tilde{M} X(:,j) ||_1$}$\leq \rho \epsilon$. 
By assumption, 
$M(:,j) = W H(:,j) = \alpha W H(:,j) + (1-\alpha) M(:,j)$ hence  
\begin{align*}
\tilde{M}(:,j) &  = \alpha \left( M(:,j) + N(:,j) \right) + (1-\alpha) \tilde{M}(:,j) \\ 
&  = \alpha \left( W  H(:,j) + N(:,j) \right) + (1-\alpha) \tilde{M}(:,j). 
\end{align*}
This gives  
\begin{align*}
||\tilde{M}(:,j) - \tilde{M} X(:,j) ||_1 
& = \alpha || M(:,j) + N(:,j) - (W+N(:,\mathcal{K})) H(:,j)  ||_1 
 \leq 2 \alpha \epsilon  \leq \rho \epsilon, 
\end{align*}
since the columns of $H$ sum to one, and $||N||_1 \leq \epsilon$. 
This result implies that any optimal solution $X^*$ satisfying \eqref{condX} must satisfy $X^*(j,j) \leq 1 - \alpha$, otherwise we could replace the $j$th column of $X^*$ using the construction above and obtain a strictly better solution since the vector $p$ in the objective function only has positive entries.  
  \end{proof}

We can now combine Lemmas~\ref{lem2} and \ref{lem3} to prove robustness of Algorithm~\ref{hot2} when there are no duplicates nor near duplicates of the columns of $W$ in the data set. \vspace{0.1cm}

  \begin{proof}[\textbf{Proof of Theorem~\ref{th1}}] 
  Let $X$ be an optimal solution of \eqref{gillisLP}.  
Let us first consider the case $\epsilon = 0$, which is particular because it allows duplicates of the columns of $W$ in the data set and the value of $\rho$ does not influence the analysis since $\rho \epsilon = 0$ for any $\rho > 0$. Let denote 
\[
\mathcal{K}_k = \{ j \ | \ M(:,j) = W(:,k) \}, 
\]
the set of indices whose corresponding column of $M$ is equal to the $k$th column of $W$. By assumption, $\kappa > 0$ hence for all $1 \leq k \leq r$ we have $W(:,k) \notin \cone(W(:,\bar{\mathcal{K}}))$ where $\bar{\mathcal{K}} = \{ 1,2,\dots,r\} \backslash \{k\}$. This implies that $\sum_{j\in \mathcal{K}_k} X(j,j) \geq 1$ for all $k$. Since we are minimizing a positive linear combination of the diagonal entries of $X$ and assigning a weight of one to each cluster $\mathcal{K}_k$ is feasible (see Equation~\ref{sepM}), we have $\sum_{j\in \mathcal{K}_k} X(j,j) = 1$. 
 Moreover, assigning all the weight to the index in $\mathcal{K}_k$ with the smallest entry in $p$ minimizes the objective function (and this index is unique since the entries of $p$ are distinct). 
 Finally, for all $1 \leq k \leq r$, there exists a unique $j$ such that $M(:,j) = W(:,k)$ and $X(j,j) = 1$ which gives the result for $\epsilon = 0$.

Otherwise $\epsilon > 0$ and $\beta < 1$, and the result follows from Lemmas~\ref{lem2} and \ref{lem3}: Let $\mathcal{K}$ be the set of $r$ indices such that $M(:,\mathcal{K}) = W$. 
By Lemma~\ref{lem2}, we have 
\[
X(k,k) \geq  1 - \frac{2 \epsilon}{\kappa (1-\beta)} \left( \frac{\rho + 2}{1-\epsilon} \right), \quad \text{ for all $k \in \mathcal{K}$}, 
\]
while, by Lemma~\ref{lem3}, 
\[
X(j,j) \leq \max\left( 1- \frac{\rho}{2}, \frac{2 \epsilon}{\kappa (1-\beta)} \left( \frac{\rho+2}{1-\epsilon} \right) \right), \quad \text{ for all $j \notin \mathcal{K}$}.  
\]
  Therefore, if   
  \[
  1 - \frac{2 \epsilon}{\kappa (1-\beta)} \left( \frac{\rho + 2}{1-\epsilon} \right) 
 \; > \;  f \;  \geq  \; 
 \max\left( 1- \frac{\rho}{2}, \frac{2 \epsilon}{\kappa (1-\beta)} \left( \frac{\rho + 2}{1-\epsilon} \right) \right), 
  \]
  where $f = 1 - \frac{\min(1,\rho)}{2} = \max\left( \frac{1}{2}, 1-\frac{\rho}{2}\right)$,  then Algorithm~\ref{hot2} extracts the $r$ indices corresponding to the columns of $W$. 
The above conditions are satisfied if  
\[
 \frac{2 \epsilon}{\kappa (1-\beta)} \left( \frac{\rho + 2}{1-\epsilon} \right) < \frac{\rho}{2} 
\quad \text{ and }  \quad 
\frac{2 \epsilon}{\kappa (1-\beta)} \left( \frac{\rho + 2}{1-\epsilon} \right)  < \frac{1}{2} , 
\] 
that is, 
$
\frac{\epsilon}{1-\epsilon} < \frac{\kappa (1-\beta) \min(1,\rho)}{4(\rho+2)}$. 
Taking 
\[
\epsilon \leq \frac{\kappa (1-\beta) \min(1,\rho)}{5(\rho+2)} < \frac{\kappa (1-\beta) \min(1,\rho)}{\rho+2} \frac{1-\epsilon}{4} 
\]
gives the results since $\epsilon \leq \frac{1}{5 (\rho+2)} < \frac{1}{10}$ for any $\rho > 0$ hence $\frac{1-\epsilon}{4} >  \frac{1}{5}$. 
  \end{proof}

  \section{Proof of Theorem~\ref{th1b}} \label{appAa}

  Theorem~\ref{th1b} can be proved using a particular construction.

 \begin{proof}[\textbf{Proof of Theorem~\ref{th1b}}]  
 Let us consider 
 \[
W = \left(
\begin{array}{c}
\frac{\kappa}{2} I_{r}  \\
(1-\frac{\kappa}{2})  e_r^T 
\end{array} 
\right), 
H = \left( 
I_r \quad \beta I_r + \frac{1-\beta}{r-1} (e_r e_r^T - I_r) \right), 
\text{ and } 
N = 0 , 
\] 
where $e_r \in \mathbb{R}^r$ is the all-ones vector,  $\frac{1}{r} \leq \beta <
1$ and $W$ is $\kappa$-robustly conical with $\kappa > 0$ \cite{G12b}. Define
$p = \left(\begin{smallmatrix}Ke_r\\e_r\end{smallmatrix}\right)$ for some large constant $K$ constant.   The matrix 
\[
X = \left( \begin{array}{cccc} 
 \left( 1 - \frac{\rho \epsilon}{\kappa (1-\beta)} \right) I_r 
 &  0  \\ 
  \frac{\rho \epsilon}{\kappa (1-\beta)}   I_r & I_r \\ 
  \end{array} \right) 
\]
 is a feasible solution of \eqref{gillisLP} for any $\epsilon \leq \frac{\kappa (1-\beta)}{\rho}$. In fact, for all $1 \leq j \leq r$, 
 \[
 ||M(:,j) - M X(:,j) ||_1 = \frac{\rho \epsilon}{\kappa (1-\beta)} ||M(:,j) -  M(:,j+r) ||_1 
  =  \rho \epsilon, 
 \]
 and it can be easily checked that $X$ satisfies the other constraints. 
 By Lemma 7 of \cite{G12b}, for $K$ sufficiently large, any optimal solution $X^*$ of \eqref{gillisLP} must satisfy 
 \[
\min_{1 \leq k \leq r} X^*(k,k) \leq \max_{1 \leq k \leq r} X(k,k) = 1 - \frac{\rho \epsilon}{\kappa (1-\beta)}, 
 \]
  (otherwise $p^T \diag(X^*) >  p^T \diag(X)$ for $K$ sufficiently large). 
 For the columns of $W$ to be extracted, one requires $X^*(k,k) > 1 - \frac{\min(1,\rho)}{2}$ for all $1 \leq k \leq r$ hence it is necessary that 
 \[
 1 - \frac{\rho \epsilon}{\kappa (1-\beta)} > 1 - \frac{\min(1,\rho)}{2} 
 \; \iff \; \epsilon < \frac{\kappa (1-\beta)}{2} \frac{\min(1,\rho)}{\rho}, 
 \]
 for Algorithm~\ref{hot2} to extract the first $r$ columns of $M$.  
 \end{proof}

  \section{Proof of Theorem~\ref{th2}} \label{appB}

 \begin{proof}[\textbf{Proof of Theorem~\ref{th2}}] 
The matrix $X^0$ from Equation~\eqref{sepM} is a feasible solution of \eqref{gillisLP}; in fact, 
  \[ 
  ||\tilde{M} - \tilde{M} X^0||_1 = ||M + N - (M+N) X^0||_1 \leq ||M - M X^0||_1 + ||N||_1 + ||N X^0||_1 \leq 2 \epsilon, 
  \]
  since $M = M X^0$, $||N||_1 \leq \epsilon$ and $||N X^0||_1 \leq ||N||_1 ||X^0||_1 \leq \epsilon$ as $||X^0||_1 = 1$. 
  Therefore, since $p = e$, any optimal solution $X^*$ of \eqref{gillisLP} satisfies 
  \[
 \tr(X^*) = p^T \diag(X^*) \leq p^T \diag(X^0) = r. 
  \]
The result then directly follows from Theorem 5 in \cite{G12b}. In fact, Algorithm~\ref{postpro}  is exactly the same as Algorithm 3  in \cite{G12b} except that the optimal solution of \eqref{gillisLP} is used instead of \eqref{rechtLP} while Theorem 5 from \cite{G12b} does not need the entries of $p$ to be distinct and only the condition $\tr(X) \leq r$ is necessary. 
Note that Theorem 5 in \cite{G12b} guarantees that there are $r$ disjoint clusters of columns of $\tilde{M}$ around each column of $W$ whose weight is strictly larger $\frac{r}{r+1}$. Therefore, the total weight is strictly larger than $r - \frac{r}{r+1} > r-1$ while it is at most  $r$ (since $\tr(X^*) \leq r$) implying that $r = \Big\lceil  \sum_{i=1}^n X^*(i,i) \Big\rceil$. 
\end{proof}

  \section{Proof of Theorem~\ref{outth}} \label{appout}

  The proof of Theorem~\ref{outth} works as follows: Let $X$ be a feasible solution of \eqref{gillisLP}. First, we show that  the diagonal entries of $X$ corresponding to the columns of $W$ and $T$ must be large enough (this follows from Theorem~\ref{th1}). Second, we show that the $\ell_1$ norm of the rows of $X$ corresponding to the columns of $W$ (resp.\@ $T$) must be sufficiently large (resp.\@ low) because the columns of $W$ (resp.\@ $T$) must be used (resp.\@ cannot be used) to reconstruct the other columns of $M$. \vspace{0.1cm}

 \begin{proof}[\textbf{Proof of Theorem~\ref{outth}}] 
 In case $\beta = 1$, $\epsilon = 0$ and the proof is similar to that of Theorem~\ref{th1};  the only difference is that the condition from Equation~\eqref{angle} has to be used to show that no weight can be assigned to off-diagonal entries of  the rows of an optimal solution of \eqref{gillisLP} corresponding to the columns of $T$. 
 Otherwise $\beta < 1$ and there are no duplicate nor near duplicate of the columns of $W$ in the data set. 
 
 Let assume without loss of generality that $\tilde{M}$ has the form 
 \[
 \tilde{M} = [T,W,WH'] + N, 
 \] 
 that is, the first $t$ columns correspond to $T$ and the $r$ next ones to $W$. Let then $X$ be an optimal solution of \eqref{gillisLP}. 

Since $[W,T]$ is $\kappa$-robustly conical, Theorem~\ref{th1} applies (as if the columns of $T$ were not outliers) and,  for all $1 \leq k \leq r + t$, 
 \[
 X(k,k) \geq  1 - \frac{8 \epsilon}{\kappa (1-\beta) (1-\epsilon)} \geq \frac{1}{2}, 
 \]
 while $X(j,j) \leq \frac{8 \epsilon}{\kappa (1-\beta) (1-\epsilon)}  \leq \frac{1}{2}$ for all $j > r + t$, since $\epsilon \leq \frac{\nu (1-\beta)}{20 (n-1)}$ where $\nu = \min(\kappa,\eta,\delta)$. 
 Therefore, only the first $r+t$ indices can potentially be extracted by Algorithm~\ref{hotout}.  
 It remains to bound above (resp.\@ below) the off-diagonal entries of the rows of $X$ corresponding to $T$ (resp.\@ $W$).

 By Lemma~\ref{lem2} (see also \cite[Lemma 2]{G12b}), we have for all $1 \leq j \leq n$ 
 \[
 || {M}(:,j) - {M} X(:,j)||_1 
 \leq \frac{4\epsilon}{1-\epsilon}  
 \quad \text{ and }  \quad 
 ||X(:,j)||_1 \leq 1 + \frac{4 \epsilon}{1-\epsilon} . 
 \]
 Using the fact that $[W,T]$ is $\kappa$-robustly conical,  for all $1 \leq k \leq t$, we have 
 \[
 ||T(:,k) - MX(:,k)||_1 \geq 
 (1 - X(k,k)) \min_{x \geq 0, y \geq 0} ||T(:,k) - T(:,\bar{\mathcal{K}})x - Wy||_1 \geq (1 - X(k,k)) \kappa, 
 \]
 implying that for all $1 \leq k \leq t$ 
  \[
 X(k,k) \geq  1 - \frac{4 \epsilon}{\kappa (1-\epsilon)}  \geq \frac{1}{2},  
 \] 
 since $\frac{4}{1-\epsilon} \leq 5$ because $\epsilon \leq \frac{1}{20}$. 
 Therefore, 
 \[
 \sum_{j \neq k} X(j,k) \leq ||X(:,k)||_1 - X(k,k)
 \leq \frac{4 \epsilon}{1-\epsilon} + \frac{4 \epsilon}{\kappa (1-\epsilon)}  
 \leq  \frac{8 \epsilon}{\kappa (1-\epsilon)}   , 
 \]
 as $\kappa, \epsilon \leq 1$.  
 Let $t+1 \leq j \leq n$ and $1 \leq k \leq t$, we have 
 \begin{align*}
  || {M}(:,j) - {M} X(:,j)||_1  
  & \geq 
  \min_{x} \min_{y \geq 0}  ||T(:,k) + T(:,\bar{\mathcal{K}})y - Wx||_1
   \geq \eta X(k,j), 
 \end{align*}
 see Equation~\eqref{angle}, which implies 
 $X(k,j) \leq  \frac{4\epsilon}{\eta (1-\epsilon)}$. 
 Hence, for all $1 \leq k \leq t$, we have 
\[
 \sum_{j \neq k} X(k,j) \leq 
  (t-1) \frac{8  \epsilon}{\kappa (1-\epsilon)}   + (n-r-t)  \frac{4\epsilon}{\eta (1-\epsilon)} 
 \leq \frac{8 (n-1) \epsilon}{\nu (1-\epsilon)} \leq \frac{1}{2}.   
 \] 
 since $\nu = \min(\kappa,\eta,\delta)$. 
By assumption, for each $t+1 \leq k \leq t+r$,  there exists some $j$ satisfying 
 $M(:,j) = WH(:,j) \neq W(:,k)$ 
and 
\[
\min_{x \geq 0} ||M(:,j) - W(:,\bar{\mathcal{K}})x||_1 \geq \delta, \quad \text{where $\bar{\mathcal{K}} = \{1,2,\dots,r\} \backslash \{k\}$}, 
 \] 
 see Equation~\eqref{gc}. For $t+r <  j \leq n$, we have 
 $X(j,j) \leq \frac{8 \epsilon}{\kappa (1-\beta) (1-\epsilon)}$. Let us denote 
 \mbox{$\mu =  \frac{8 (n-r-t) \epsilon}{\kappa (1-\beta) (1-\epsilon)}$} which is an upper bound for the total weight that can be assigned to the columns of $M$ different from $W$ and $T$. 
 Then, using Equation~\eqref{gc}, we have 
 \begin{align*}
 \norm{M(:,j) - M X(:,j)}_1 
 & \geq 
 (1-\mu) \,  \min_{y \geq 0} 
 \left\|M(:,j) - \frac{1}{1- \mu} W X(t+1\text{:}r+t,j) - Ty\right\|_1 \\
  & \geq (1-\mu) \left(1-\frac{X(k,j)}{1-\mu}\right) \delta. 
\end{align*}
 This implies 
 \[
  \frac{X(k,j)}{1-\mu} \geq 1 - \frac{4 \epsilon}{\delta (1-\mu) (1-\epsilon)}  
  \]
  and 
 \begin{align*} 
 X(k,j) 
  & \geq 1 - \frac{8 (n-r-t) \epsilon}{\kappa (1-\beta) (1-\epsilon)}  - \frac{4 \epsilon}{\delta (1-\epsilon)}  \\
 &  \geq 1 - \frac{8 (n-1) \epsilon}{\nu (1-\beta) (1-\epsilon)}  
  \geq \frac{1}{2}, 
 \end{align*}
 since $\beta \leq 1$ and $\epsilon \leq \frac{\nu (1-\beta)}{20 (n-1)}$, and the proof is complete. 
\end{proof}

\vskip 0.2in

\bibliographystyle{spmpsci} 
\bibliography{Biography}

\end{document}